\documentclass[Afour,times,sageh]{sagej}

\usepackage{moreverb,url}
\usepackage[normalem]{ulem} 

\usepackage[colorlinks,bookmarksopen,bookmarksnumbered,citecolor=red,urlcolor=red]{hyperref}

\newcommand\BibTeX{{\rmfamily B\kern-.05em \textsc{i\kern-.025em b}\kern-.08em
T\kern-.1667em\lower.7ex\hbox{E}\kern-.125emX}}

\def\volumeyear{2026}

\usepackage{xcolor}

\definecolor{b}{RGB}{0, 0, 255}
\usepackage{placeins}
  \usepackage{blindtext}
\usepackage[english]{babel}
\usepackage{float}
\usepackage{mathtools}
\usepackage{hyperref}
\usepackage{nicefrac}
\usepackage{amsthm}
\usepackage{amsmath} 
\usepackage[Symbol]{upgreek}
\usepackage{amssymb}

\usepackage{bm}
\newcommand{\vct}{\mathbf}
\newcommand{\vect}[1]{\boldsymbol{#1}}
\newtheorem{theorem}{Theorem}

\newtheorem{remark}{Remark}
\usepackage{multirow}
\usepackage{graphicx}
\usepackage{color}

\usepackage[caption=false,font=footnotesize]{subfig}
\usepackage{changepage}
\usepackage{enumitem}
\usepackage{siunitx}

\begin{document}

\title{\LARGE \bf
Postural Virtual Fixtures for Ergonomic Physical Interactions with Supernumerary Robotic Bodies
}

\author{Theodora Kastritsi,  Marta Lagomarsino, 
 and Arash Ajoudani}  
\affiliation{Authors are with Human-Robot Interfaces and Interaction Laboratory, Istituto Italiano di Tecnologia, Genoa, Italy. }

\corrauth{Theodora Kastritsi, \email{tkastrit@gmail.com }}

\begin{abstract}

Conjoined collaborative robots, functioning as supernumerary robotic bodies (SRBs), can enhance human load tolerance abilities. However, in tasks involving physical interaction with humans, users may still adopt awkward, non-ergonomic postures, which can lead to discomfort or injury over time. In this paper, we propose a novel control framework that provides kinesthetic feedback to SRB users when a non-ergonomic posture is detected, offering resistance to discourage such behaviors. This approach aims to foster long-term learning of ergonomic habits and promote proper posture during physical interactions.  To achieve this, a virtual fixture method is developed, integrated with a continuous, online ergonomic posture assessment framework. Additionally, to improve coordination between the operator and the SRB, which consists of a robotic arm mounted on a floating base, the position of the floating base is adjusted as needed. Experimental results demonstrate the functionality and efficacy of the ergonomics-driven control framework, including two user studies involving practical loco-manipulation tasks with 14 subjects, comparing the proposed framework with a baseline control framework that does not account for human ergonomics.
\end{abstract}
\keywords{
Human Performance Augmentation, 
Physical Human-Robot Interaction,
Compliance and Impedance Control,
Whole-Body Motion Planning and Control,
Human Factors and Human-in-the-loop 
}

\maketitle

\section{INTRODUCTION}

Technological advancements have enabled the augmentation of human capabilities through robotics, transforming what was once a futuristic vision into reality. Exoskeletons, for instance, have shown significant promise in enhancing human strength, allowing users to perform physically demanding tasks more efficiently. However, some challenges remain, such the additional weight that strains joints and muscles \citep{de2016exoskeletons, bar2021influence}, operational pressures that can lead to discomfort or tissue irritation \citep{kermavnar2021effects}, and limitations in natural movement, mobility, and postural balance \citep{kermavnar2021effects, theurel2019occupational}.

Another significant innovation in human augmentation is the development of supernumerary robotic limbs (SRLs), which are wearable robotics devices designed to extend human capabilities. SRLs by providing additional robotic limbs, such as extra arms, legs, or fingers, can also compensate for lost motor functions in individuals with physical impairments \citep{hendriks2024enhancing}. SRLs hold substantial potential, as they can be attached at various points on the body and adopt diverse structural configurations, enabling them to support the user and perform a wide range of auxiliary functions.  To achieve this, they require an interface with the human user, which can be implemented in multiple ways; for example, it may include motion tracking, EEG-based control, or EMG-based control \citep{eden2022principles,prattichizzo2021human}. Recent work has also demonstrated multimodal control of an extra robotic arm using gaze and respiration signals \citep{dominijanni2023human}. Despite their advantages, SRLs share several limitations with exoskeletons, including weight, discomfort, and reduced mobility \citep{tong2021review}, that limit their practical usability. Like exoskeletons, they rely heavily on human coupling for operation, often redistributing the load from one limb to another rather than fully alleviating it. In particular, in setups that incorporate large robotic limbs, the user must support not only the weight of the external loads but also the weight of the device itself. Although recent efforts have been made to develop lightweight systems \citep{hasanen2024design}, the limited durability of their components and low payload capacity continue to restrict their practical usability. Additionally, SRLs face unique challenges, such as compensating for interference from the wearer's body motion to achieve robot-like precision \citep{tong2021review}.
Today, essential aspects such as wearability, efficiency, and usability remain unresolved for these technologies \citep{yang2021supernumerary}.

To overcome these limitations, supernumerary robotic bodies (SRBs)  have recently emerged as a promising solution for physically assisting and augmenting humans in conjoined loco-manipulation tasks \citep{kim2020moca, giammarino2024super}. Unlike exoskeletons and SRLs, which are often constrained by their reliance on the human body and carry significant weight, SRBs are floating-base robotic manipulators that, like SRLs, can adopt various structural configurations  but uniquely ground external loads, eliminating extra weight on the user and ensuring comfort and mobility. SRBs, like SRLs, require a human interface, which in the case of SRBs can be attached to or detached from the robot, enabling local control and teleoperation, respectively \citep{raei2024multipurpose}. In particular, the local control is equipped with an admittance-type interface that provides an intuitive modality for physical human-robot interaction (pHRI), allowing the users to directly command the robot's motion. By decoupling the human-exerted force from the environmental force, SRBs enable the robot to handle heavy tools while allowing the user to guide the end-effector effortlessly. 
Furthermore, when not  operated by humans, SRBs can  function autonomously, reducing equipment downtime and associated operational costs. While SRBs offer significant potential for enhancing task efficiency and reducing physical demands, improper usage during pHRI can still pose long-term health risks to workers. This is especially true in scenarios where human operators adopt and sustain awkward, non-ergonomic postures while interacting with the SRBs. Even in low-load situations, poor posture can result in increased musculoskeletal strain, reduced circulation, joint misalignment, and other related issues, ultimately compromising worker well-being over time \citep{lorenzini2023ergonomic}. This remaining issue can undermine the ergonomic benefits of SRBs, hence, advanced control strategies and online monitoring systems are essential to exploit their full potential. By integrating adaptive ergonomics monitoring and intelligent feedback mechanisms, SRBs can dynamically adjust their behavior to support the operator's posture and minimize the risk of strain, thereby enhancing both safety and efficiency in collaborative work environments. 

\subsection{Related Works }
\subsubsection{Postural Ergonomics:} Research on ergonomic posture assessment, such as in \cite{mcatamney1993rula,hignett2000rapid}, provides valuable insights into human body ergonomics through joint-angle-based evaluations.   These assessments have been used to evaluate user comfort in human-robot interaction tasks offline \citep{morfino2024hybrid} and have also contributed to the development of robotic strategies that support ergonomic collaboration, as reviewed in \cite{proia2021control, lorenzini2023ergonomic}. In addition, online ergonomic assessments based on joint range of motion have been proposed in \cite{lorenzini2022online, lagomarsino2023maximising}, assuming that the optimal joint position lies at the midpoint of its range, which may not hold true for all joints. Most existing strategies supporting ergonomic collaboration primarily focus on the planning level, aiming to position objects held by robots in ergonomically favorable poses for users \citep{shafti2019real, zanchettin2019collaborative}, without considering pHRI, which limits robots' ability to dynamically respond to user-applied forces. To address this, \cite{ferraguti2020unified} introduced a framework that combines ergonomic positioning with admittance control. However, this approach does not account for user ergonomics during manipulation, allowing non-ergonomic postures to emerge despite an initially ergonomic setup. In a recent work, \cite{liao2023ergo} utilizes an SRB for the co-transportation of an object and an admittance control model designed to guide the user's wrist toward a more ergonomic position via a virtual force. However, this approach only considers motion in the sagittal plane for generating the virtual force and lacks passivity analysis, which is crucial for ensuring stable pHRI \citep{keemink2018admittance}. Hence, the remaining challenge is to develop a method that can be integrated into the SRB control framework to effectively promote and maintain good postures during pHRI.
\subsubsection{Virtual Fixture:}
To provide kinesthetic feedback and assist human operators in scenarios with partial task knowledge, the concept of Virtual Fixture (VF) has been widely explored in the field of pHRI, as well as in other areas such as teleoperation, where it was first proposed \citep{rosenberg1992use}. Virtual Fixtures (VFs) are implemented to influence robot motion by guiding it within predefined desired regions, called Guidance VFs, or preventing it from entering forbidden or sensitive areas, known as Forbidden-Region VFs. A review of VFs can be found in \cite{bowyer2013active}. Various approaches to enforcing VFs exist; some use energy storage techniques, such as artificial potential fields \citep{kastritsi2019stability} and the proxy/god-object approach \citep{ryden2011proxy}, while others avoid energy storage, incorporating friction models in impedance-controlled robots \citep{bowyer2015dissipative} or applying anisotropic projections of human-exerted forces in an admittance model \citep{castillo2010virtual}. However, the latter category does not provide kinesthetic feedback when the robot end-effector is stationary unless combined with an energy storage method. VFs are often represented using point clouds \citep{kastritsi2019stability}, polygonal meshes \citep{zilles1995constraint, ruspini1997haptic}, parametric curves or surfaces \citep{papageorgiou2020passive2}, or Cartesian paths learned through demonstration \citep{papageorgiou2020passive}. However, these representations are typically defined in the robot's joint or task space, making them unsuitable for enforcing ergonomics-based virtual fixtures, as posture ergonomics is inherently human-centric and mapping between robot and human joint configurations is non-trivial.  A recent work \citep{hu2024proxy} that utilizes Guidance VF claims that by constraining the rotation of the robot end-effector, they achieve a more ergonomic posture for the user. However, this claim is not generally true and lacks support from both theoretical reasoning and experimental validation.

\subsubsection{Mobile Manipulators}: Mobile manipulators, which combine a robotic arm with a mobile platform, offer enhanced mobility and adaptability in dynamic environments. However, they have not been widely adopted for physical human-robot collaboration. In recent years, advancements in sensor technology have facilitated their use in collaborative tasks with humans, such as hand-guiding \citep{navarro2017framework}, co-transportation \citep{benzi2022whole, sirintuna2024enhancing}, and assisted guidance \citep{balatti2024robot}. Despite this progress, existing research has often overlooked the consideration of human ergonomics during collaboration, an important factor that can significantly impact user comfort, safety, and long-term effectiveness during physical interaction. Utilizing a mobile manipulator in the form of an SRB, \cite{liao2023ergo} proposed an admittance control schema to guide the user's wrist ergonomically; however, it is limited to the sagittal plane and lacks passivity analysis. Furthermore, the framework overlooks the position of the human user, which can constrain their motion, hindering the task.

\subsection{Contributions} 
\begin{figure*}[!ht]
     \centering \includegraphics[width=0.98\textwidth]{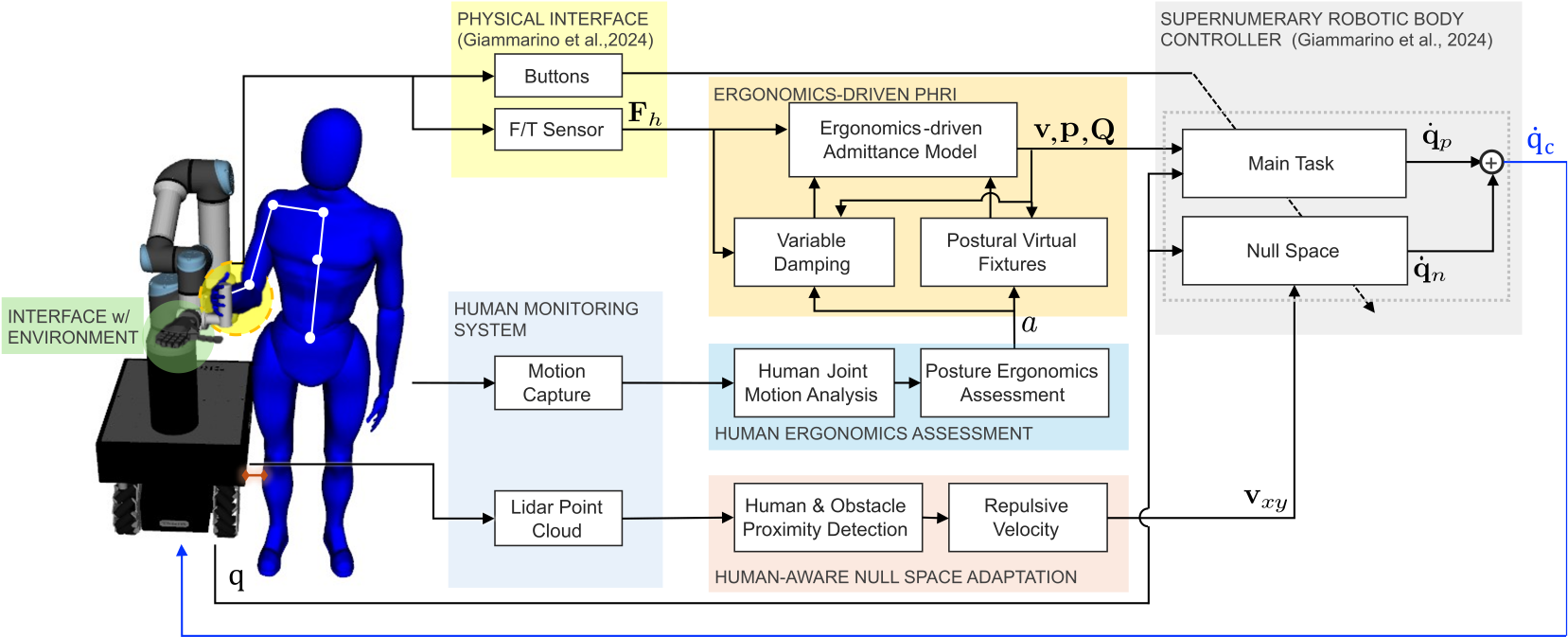}     
\caption{Block diagram illustrating the proposed methodology, outlining the key components and their interactions.}
\label{fig:block-diagram}
\end{figure*}

Motivated by the aforementioned challenges and gaps, this work proposes a novel ergonomics-driven control framework for SRBs based on online human posture assessment and kinesthetic feedback. By providing kinesthetic feedback when non-ergonomic postures are detected, while dynamically adjusting the position of the floating base to optimize coordination between the user and the SRB, the proposed framework fosters a seamless and ergonomic pHRI. The key contributions of the proposed framework include:
\begin{itemize}
    \item Ergonomics Postural Virtual Fixtures: For the first time, VF are designed with consideration for the user’s postural ergonomics, helping to prevent poor ergonomics while preserving flexibility and autonomy in task execution.
    \item Passivity Assurance:  Through rigorous mathematical analysis, it is ensured that the desired dynamic behavior of the robot end-effector, combined with the Ergonomics Postural Virtual Fixtures, provides a passive framework. Passivity is not inherently guaranteed when using a VF approach and is particularly critical during physical human-robot interaction, as it ensures the safety of the user.
    \item Online Continuous RULA-based Ergonomics Assessment Metric: Based on the Rapid Upper Limb Assessment (RULA) of \cite{mcatamney1993rula}, a metric is proposed and formulated as a continuous smooth function. Unlike conventional ergonomics metrics, which are often discontinuous or not designed with widely recognized ergonomics assessment tools in mind, the proposed method is specifically designed for seamless integration into robotic control frameworks and reflects a reliable ergonomic level of posture.
    \item Null-space coordination: The redundancy of SRBs is exploited to ensure that the user’s motion is not obstructed by repelling the SRBs floating base when the human is in close proximity, while simultaneously preventing the robot from colliding with obstacles in the environment.
 
\end{itemize}
In summary, this novel approach offers a robust solution for ergonomic-aware physical human-robot interaction, enhancing user safety and adaptability by reducing human physical load and avoiding non-ergonomic postures at the same time. This control framework surpasses a simple aggregation of a whole-body SRB controller, VF enforcement, null-space adaptation, and online human posture ergonomic assessment, making substantial contributions to the integration of these components and enhancing each individual aspect of the system.  Two extensive user studies were conducted, demonstrating for the first time that Ergonomics Postural Virtual Fixtures help users adopt more ergonomic postures during conjoint loco-manipulation tasks, in addition to completely offloading the payloads at hand through the use of the proposed SRB, while also exhibiting a learning effect.

The rest of the paper is organized as follows. The \textit{PROPOSED METHODOLOGY} section presents the ergonomics-aware control framework illustrated by the block diagram in Figure \ref{fig:block-diagram}. Subsequently, the \textit{EXPERIMENTS} section outlines the implementation of the proposed solution, aimed at validating its effectiveness. This is followed by the \textit{EXPERIMENTAL RESULTS} section, which presents and analyzes the findings. Finally, the \textit{CONCLUSIONS} section summarizes the results and highlights potential future research directions. 

\section{PROPOSED METHODOLOGY }

\subsection{Postural Virtual Fixtures}
\label{sec:erg_adm}
In this subsection, the ergonomics-driven admittance model of the robot end-effector is proposed, introducing the motion dynamics of the postural virtual fixtures and providing the passivity analysis. One of the control objectives is to kinesthetically inform the user whenever a non-ergonomic posture is detected. To achieve this, the following admittance model is used to enable the robot to respond appropriately to the human generalized force $\mathbf{F}_{h}=[\mathbf f_h^\mathrm{T}  \  \vect \tau_h^\mathrm{T}]^\mathrm{T} \in \mathbb{R} ^6$  exerted  on its end-effector:
\begin{equation} \label{eq:adm_x2y}
\mathbf{M}_{d} \dot{\mathbf{v}} + \mathbf{D}_{d} \mathbf{v} = \mathbf{F}_{h}+ \mathbf{u}_c
\end{equation}
where $\mathbf{v}= [ \dot{\mathbf{p}}\ \;\boldsymbol{\omega}]^\mathrm{T} \in \mathbb{R}^6 $ is the robot end-effector generalised  velocity, $\mathbf{M}_{d} \in \mathbb{R} ^{6 \times 6}$  is a positive constant diagonal matrix of the target inertia$,  \ \mathbf{D}_{d} \in \mathbb{R} ^{6 \times 6}$ is a positive  matrix of
the target damping,  and $\mathbf{u}_c\in \mathbb{R} ^6$ is a control term designed to provide kinesthetic feedback, encouraging the human operator to maintain an ergonomic posture.

In order to design the control term $\mathbf{u}_c$, we introduce the concept of postural virtual fixtures, leveraging the idea of the god-object algorithm. Originally developed for haptic rendering in virtual environments, the god-object algorithm, also referred to as the proxy algorithm, allows the god-object to follow the robot end-effector in free space while remaining outside virtual obstacles defined in Cartesian space when contact between the obstacles and the god-object occurs. Then the deviation between the god-object and the robot's position is used to the robot's controller to generate haptic feedback, enabling realistic rendering of interactions. In contrast to the majority of existing works, here we express the motion of the god-object using a motion dynamics equation, rather than using an algorithmic approach, offering the possibility of providing passivity analysis in a closed form. Furthermore, the god-object here serves as an idealized representation of the robot end-effector pose, constrained to the space where the human posture is not at high risk, thus constructing the virtual fixture with consideration of the human side rather than the task. With this in mind, the motion dynamics of the god-object is expressed as follows:
\begin{equation}
{\mathbf{v}}_{g}=-\mathrm k_r f(a,\mathbf p_e)\begin{bmatrix}
   \mathbf p_e\\ {\vect{\epsilon}_e}
  \end{bmatrix}
  \label{eq:dyn_god}
\end{equation}
where ${\mathbf{v}}_{g}=[ \dot{\mathbf p}_g\ {\vect{\omega}_g}]^\mathrm{T} \in \mathbb{R}^{6}$ is the god-object generalized velocity,  $\mathrm{k_r}\in \mathbb{R}_>$ is a positive definite gain that, together with the value of the $f(a,\mathbf p_e)$, affects the time of convergence,   $\mathbf p_e,\ \vect \epsilon_e \in \mathbb{R}^{3}$  are the position and orientation deviations, respectively and $f(a, \mathbf{p}_e)\in [0 \ \  1 ]$ is a continuous variable that depends on the ergonomics of the human posture $a \in [0 \ 1 ]$  and on the position deviation. 

The position deviation is defined as   $\mathbf p_e=\mathbf p_g-\mathbf p$, representing the difference between the robot end-effector and the god-object.
   The orientation deviation  in quaternion terms, associated with the deviation rotation matrix ${\mathbf R}_e={\mathbf R}_{g}{\mathbf R}^\mathrm{T} \in SO(3)$, is given by $\mathbf{Q}_e\triangleq\mathbf{Q}_g\otimes \mathbf{Q}^{-1}=\begin{bmatrix} \eta_e \\ \vect{\epsilon}_e\end{bmatrix} \in \mathbb{S}^3$ where  $\eta_e = \cos\left(\frac{\phi_e}{2}\right)\in \mathbb{R}_{\geq}$,  with $\phi_e \in (-\pi, \pi)$ being the deviation angle in the equivalent angle-axis representation. Here, $\mathbf{Q}^{-1} \in \mathbb{S}^3 $  denotes the inverse of the orientation quaternion $\mathbf{Q} \in \mathbb{S}^3$, and $\otimes$ denotes quaternion multiplication.  Notice that to calculate the orientation in quaternion terms   $\mathbf{Q}$ and  $\mathbf{Q}_g$, we integrate the angular velocities, $\boldsymbol{\omega}$ and $\boldsymbol{\omega}_g$ provided by \eqref{eq:adm_x2y} and \eqref{eq:dyn_god}, using the exponential map  $\mathrm{exp}_Q: \, \mathbb{R}^3\rightarrow\mathbb{S}^3 $, as follows:
 \begin{equation} \label{eq:int_qr}
         \mathbf Q_i(t+\mathrm{T}_c)=\mathrm{exp}_Q{(\frac{1}{2} {\vect \omega}_i\mathrm{T}_c)}\, \otimes \, \mathbf Q_i(t) \ \in \mathbb{S}^3 
    \end{equation}
where $\mathrm{T}_c\in \mathbb{R}_>$ is the robot control cycle.

The function $f(a, \mathbf{p}_e)$ is non-negative, well-defined, which modulates the impact of deviations on the motion of the god-object. Our goal is for the god-object to avoid following the robot's motion when the human is in a high-risk non-ergonomic posture, i.e.,  when $a=0$, while following it when the human is or returns to an ergonomic posture with a velocity based on the value of $a$. To achieve this, we define a unit vector $\mathbf n \in \mathbb{R}^3 $ to reflect the robot's motion vector when the motion is ergonomic,  or to indicate the direction that caused the user to move away from an ergonomic posture when the human's posture is non-ergonomic. To achieve this, the motion dynamics of the vector $\mathbf n$ is designed  as follows:
\begin{equation} \label{eq:dotn}
        \dot {\mathbf n}=-{ \omega}_n\mathbf n\times ( \mathbf n \times  \mathbf n_d) , \ \ \
    {\omega}_n = \mathrm k_n  \, a \,{sin(|\phi_n|/2)}
\end{equation}
\begin{equation}\label{dire} \mathbf {n}_d=\begin{cases}
\dfrac{\dot {\mathbf p}}{||\dot {\mathbf p}||} \ \  \ \ \ , \ \text{if} \ ||\dot {\mathbf p}||> \epsilon  \\
 \ \ \ \mathbf 0_{3} \ \  \ \  \  \ ,  \  \text{otherwise}.
\end{cases} \in \mathbb{R}^3\end{equation}
where $\times$ denotes the vector cross product, $\mathrm k_n\in \mathbb{R}_>$ is a  gain term, and $\phi_n=ang(\mathbf n , \mathbf n_d) \ \in [-\pi, \  \pi)$ gives the angle between the current desired  direction of motion $\mathbf n_d$ and the vector  $\mathbf n$ as follows:
\begin{equation} \label{eq: angle}
ang(\mathbf n , \mathbf n_d)= \begin{cases} \  \  \  \ \  \ \     \ 0 \ \ \  \  ,    \text{ if } \mathbf n^\mathrm{T}\mathbf n_d=|| { \mathbf n \times \mathbf n_d }||=0 
   \\ atan2\big(|| { \mathbf n \times \mathbf n_d }||, { \mathbf n^\mathrm{T}\mathbf n_d }\big),    \text{ otherwise}.
  \end{cases}
\end{equation}
To calculate the vector $\mathbf n$, we integrate \eqref{eq:dotn}   using the exponential map  $\mathrm{exp}_R: \mathbb{R}^3 \rightarrow SO(3)$, as follows: 
 \begin{equation} \label{eq:n-new}
         \mathbf n(t+\mathrm{T}_c)=\mathrm{exp}_R{( {\omega}_n\mathrm{T}_c \,  \mathbf n\times  \mathbf n_d )}\, \mathbf n(t).
    \end{equation}
 \begin{figure}[!t]
     \centering \includegraphics[width=0.4\textwidth]{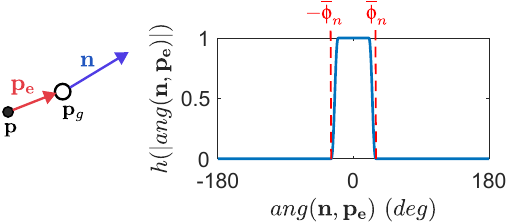}
        \caption{Visualization of the  sigmoid function $h$ used for 
for smooth switching.}
         \label{fig:h}
\end{figure}
In the case where the human is in a non-ergonomic posture, the vector $\mathbf n$ must form an acute angle with the position deviation  $\mathbf p_e$ to indicate that the human is moving back to the ergonomic space.  Based on this principle, we define the function  $\mathbf f(a,\mathbf p_e)$ to be zero when the user is in a non-ergonomic posture and moving further away from it. Conversely, it should be non-zero when the user is either in an ergonomic posture or attempting to return to it. The function that achieve these  is defined as follows:
\begin{equation} \label{eq:f}
    \mathbf f(a,\mathbf p_e)={a}^m+(1-{a}^m)h(|ang(\mathbf n , \mathbf{p_e})| ;\overline{\upphi}_{n}-\updelta_n,\overline{\upphi}_{n})
\end{equation}
where $m\in\mathbb{R}_>$ controls the sensitivity of the function $f(a,\mathbf p_e)$ to changes in $a$, with $m>1$ amplifying the effect of 
$a$, $m=1$ producing a linear response, and 
$m<1$ softening the transformation for smoother behavior. The function
$h(.)$ $\in[0  \ 1]$ is given by the following sigmoid function for smooth switching:
\begin{equation}\
    h(w;\underline{\mathrm n},\overline{\mathrm n})\hspace{-0.08cm} =\hspace{-0.08cm}\begin{cases}
		\hspace{-0.08cm}	1 , & \hspace{-0.3cm} \text{if $w<\underline{\mathrm n}$ }\\
           \hspace{-0.08cm} 0, & \hspace{-0.3cm} \text{if $w>\overline{\mathrm n}$ }\\ \hspace{-0.08cm}
   1-6c(w)^5+15c(w)^4-10c(w)^3,& \hspace{-0.3cm}\text{otherwise}   
		 \end{cases}
   \label{eq:h}
\end{equation}
with $c(w)=\dfrac{w-\underline{\mathrm n}}{\overline{\mathrm n}-\underline{\mathrm n}} $. A visualization of this function can be seen in Figure \ref{fig:h}. $\overline{\upphi}_{n}$ represents the angle below which the human is considered to be returning to the ergonomic space and $\updelta_n$ defines the range of angles where the sigmoid function $h$ transitions from 1 to 0.

To enhance the maintenance of an ergonomic posture in a physical human-robot interaction framework, we introduce a stiffness term through the robot control term $\mathbf u_c$  between the robot's  pose, determined by integrating the target admittance model \eqref{eq:adm_x2y} and the god-object pose, defined by integrating \eqref{eq:dyn_god}, as follow:
\begin{equation}\label{eq:u_c}
    \mathbf u_c=\mathbf K_d  \begin{bmatrix}
   \mathbf p_e\\ \mathbf R_e^\mathrm{T}{\vect{\epsilon}_e}
  \end{bmatrix}
\end{equation} 
where $\mathbf K_d =\{\mathrm k_p \mathbf I_{3 \times 3}, \mathrm k_o \mathbf I_{3 \times 3}\}\in \mathbb{R} ^{6 \times 6}$ is a positive definite constant stiffness
matrix. This control term provides kinesthetic feedback to the user about the ergonomics of their posture.
    
\begin{remark}
 Notice that the control term $\mathbf{u}_c$ allows the user to perceive deviations between the robot end-effector and the god-object pose. These deviations become significant when the user adopts a high-risk non-ergonomic posture and moves away from the god-object pose, i.e., when $f(a, \mathbf{p}_e) = 0$. However, when $f(a, \mathbf{p}_e) = 1$, the god-object can precisely follow the robot end-effector pose. In this case, the user can perceive the dynamics of the models \eqref{eq:adm_x2y} and \eqref{eq:dyn_god}; the parameters of these models should be selected appropriately to ensure the user experiences an enhanced sense of control during the task.
\end{remark}

\begin{theorem}
For the system \eqref{eq:adm_x2y}, \eqref{eq:dyn_god}, utilizing the control term  \eqref{eq:u_c}, the following statements are
valid under the exertion of the human force $\mathbf F_h$:
\begin{enumerate}[label=\arabic*)]
    \item the generalized robot velocity  $\mathbf v$ and human force  $\mathbf F_h$  form a passive pair.
    \item the pose deviations  between the robot and the god-object is bounded.
\end{enumerate}
\end{theorem}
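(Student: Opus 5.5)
We will use a storage function that adds the kinetic energy of the admittance model to the potential energy of the virtual spring between the robot and the god-object:
\begin{equation*}
V=\tfrac12\mathbf v^\mathrm{T}\mathbf M_d\mathbf v+\tfrac12\mathrm k_p\mathbf p_e^\mathrm{T}\mathbf p_e+2\mathrm k_o(1-\eta_e),
\end{equation*}
where $2\mathrm k_o(1-\eta_e)\ge 0$ is the standard quaternion potential. It vanishes only when $\mathbf Q_e$ is the identity, which holds because $\eta_e\ge0$. Then $V\ge0$, and $V$ is radially unbounded in $(\mathbf v,\mathbf p_e)$.

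The plan is to differentiate $V$ along the closed loop. From \eqref{eq:adm_x2y} we get
\begin{equation*}
\mathbf v^\mathrm{T}\mathbf M_d\dot{\mathbf v}=-\mathbf v^\mathrm{T}\mathbf D_d\mathbf v+\mathbf v^\mathrm{T}\mathbf F_h+\mathbf v^\mathrm{T}\mathbf u_c.
\end{equation*}
For the spring terms, $\dot{\mathbf p}_e=\dot{\mathbf p}_g-\dot{\mathbf p}$. The quaternion propagation of $\mathbf Q_e=\mathbf Q_g\otimes\mathbf Q^{-1}$ gives
\begin{equation*}
\dot\eta_e=-\tfrac12\vect\epsilon_e^\mathrm{T}\big(\vect\omega_g-\mathbf R_e\vect\omega\big),
\end{equation*}
with both angular velocities expressed in the appropriate frames. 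Substituting \eqref{eq:u_c}, the cross terms $\mathbf v^\mathrm{T}\mathbf u_c=\mathrm k_p\dot{\mathbf p}^\mathrm{T}\mathbf p_e+\mathrm k_o\vect\omega^\mathrm{T}\mathbf R_e^\mathrm{T}\vect\epsilon_e$ cancel exactly against the robot-velocity parts of the potential derivative. The factor $\mathbf R_e^\mathrm{T}$ in \eqref{eq:u_c} was chosen precisely for this purpose. What remains is the god-object contribution $\mathrm k_p\mathbf p_e^\mathrm{T}\dot{\mathbf p}_g+\mathrm k_o\vect\epsilon_e^\mathrm{T}\vect\omega_g$. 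Inserting \eqref{eq:dyn_god} turns this into
\begin{equation*}
-\mathrm k_r f(a,\mathbf p_e)\big(\mathrm k_p\|\mathbf p_e\|^2+\mathrm k_o\|\vect\epsilon_e\|^2\big)\le 0,
\end{equation*}
since $f\in[0,1]$ by construction of \eqref{eq:f} and \eqref{eq:h}. Altogether,
\begin{equation*}
\dot V=-\mathbf v^\mathrm{T}\mathbf D_d\mathbf v-\mathrm k_r f\big(\mathrm k_p\|\mathbf p_e\|^2+\mathrm k_o\|\vect\epsilon_e\|^2\big)+\mathbf v^\mathrm{T}\mathbf F_h\le\mathbf v^\mathrm{T}\mathbf F_h.
\end{equation*}
Integrating from $0$ to $t$ gives $\int_0^t\mathbf v^\mathrm{T}\mathbf F_h\,d\tau\ge -V(0)$, which is statement 1).

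For statement 2), we use that the energy stored in the spring cannot exceed the initial energy plus the energy injected by the human:
\begin{equation*}
\tfrac12\mathrm k_p\|\mathbf p_e\|^2+2\mathrm k_o(1-\eta_e)\le V(t)\le V(0)+\int_0^t\mathbf v^\mathrm{T}\mathbf F_h\,d\tau.
\end{equation*}
Under the standing assumption that the human acts as a passive (finite-energy) interactor, the right-hand side is bounded, so $\mathbf p_e$ is bounded. Independently, the orientation deviation is always bounded because $\|\vect\epsilon_e\|\le1$ on $\mathbb S^3$. Alternatively, if $\mathbf F_h$ is bounded, strict dissipation through $\mathbf D_d$ gives $\mathbf v$ bounded, and we will argue boundedness of $\mathbf p_e$ through the same inequality.

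The main obstacle is the orientation bookkeeping: getting the frames right in the derivative of $\eta_e$ so that the $\mathbf R_e^\mathrm{T}\vect\epsilon_e$ term cancels exactly. This requires checking the convention (spatial versus body angular velocity) implied by the left-multiplication in \eqref{eq:int_qr}. A second point is that $f$ depends on the discontinuous $\mathbf n_d$ and the switching angle. Since only $0\le f\le1$ is used, this does not affect the sign argument, but we will note that the solution is understood in the Carathéodory/sampled sense.
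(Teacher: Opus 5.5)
Your part 1) is the paper's argument. It uses the same storage function, the same identity $\dot\eta_e=-\tfrac12\vect\epsilon_e^\mathrm{T}(\vect\omega_g-\mathbf R_e\vect\omega)$, the same cancellation of the $\mathbf v^\mathrm{T}\mathbf u_c$ cross terms, and reaches the same bound $\dot V\le-\mathbf v^\mathrm{T}\mathbf D_d\mathbf v+\mathbf v^\mathrm{T}\mathbf F_h$. Your frame concern resolves the way you hoped: left multiplication in \eqref{eq:int_qr} means spatial angular velocities, and then $\mathbf R_e=\mathbf R_g\mathbf R^\mathrm{T}$ gives exactly $\vect\omega_e=\vect\omega_g-\mathbf R_e\vect\omega$.

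Part 2) takes a different route. You bound $V(t)\le V(0)+\int_0^t\mathbf v^\mathrm{T}\mathbf F_h\,d\tau$ and assume the human is a passive port, so the injected energy is bounded above. The paper instead completes the square to get $\dot V\le\tfrac14\lambda_{max}(\mathbf D_d^{-1})\,\mathbf F_h^\mathrm{T}\mathbf F_h$, and assumes $\int\mathbf F_h^\mathrm{T}\mathbf F_h\,dt<\infty$. Your version is the classical passive-interconnection argument and needs no damping at all. Its hypothesis, however, concerns a closed-loop quantity that depends on $\mathbf v$, and it is an idealization, since a human can actively inject energy. The paper's hypothesis is on the force signal alone. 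The damping then absorbs the velocity, implicitly through the uniform bound $\mathbf D_d\ge\mathbf D_c>0$, which holds because $\mathbf D_v,\mathbf D_f\ge 0$. Under the paper's hypothesis your inequality does not close by itself: you would need exactly that completing-the-square step to control $\int\mathbf v^\mathrm{T}\mathbf F_h$.

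Your fallback for merely bounded $\mathbf F_h$ does not work as stated; drop it or replace it with the $L_2$ hypothesis. There are two problems with it. First, getting $\mathbf v$ bounded from \eqref{eq:adm_x2y} needs $\mathbf u_c$, hence $\mathbf p_e$, bounded first, so the argument is circular. Second, with only $\mathbf F_h\in L_\infty$, integrating $\dot V\le\tfrac14\mathbf F_h^\mathrm{T}\mathbf D_d^{-1}\mathbf F_h$ yields only $V(t)\le V(0)+ct$, because when $f=0$ no term in $\dot V$ dissipates $\mathbf p_e$. Your remark $\|\vect\epsilon_e\|\le1$ is correct and makes the orientation part trivial; the paper reaches the same conclusion through the boundedness of $\Psi_e$.
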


\begin{proof}
1) Let us define  the following storage function:
\begin{equation} \label{eq:V5} V=\mathbf v^\mathrm{T}\dfrac{ \mathbf M_{d}}{2}\mathbf v+ \mathbf p_e^\mathrm{T} \dfrac{ \mathrm k_{p}}{2} \mathbf p_e+2\mathrm{k_o}\Psi_e\geq 0 \end{equation}
where   $\Psi_e = 1 - \eta_e \in \mathbb{R}_{\geq}$.  Notice that, since $\eta_e = \cos\left(\frac{\phi_e}{2}\right)$, $\Psi_e$ is non negative and equal to zero if and only if $\phi_e = 0$, for all $\phi_e \in (-\pi, \pi)$.

Taking the time derivative of 
\eqref{eq:V5} and substituting $\dot{\mathbf v}$ from \eqref{eq:adm_x2y} yields:
\begin{equation} \label{eq:dotV12} 
\begin{split}
\dot V =&-\mathbf v^\mathrm{T}\mathbf D_{d} \mathbf v+ \mathrm k_{p}\dot{\mathbf p}^\mathrm{T}\mathbf p_e+\mathrm k_{o}{\vect \omega}^\mathrm{T} \mathbf R_e ^\mathrm{T} \vect{\epsilon}_e\\ &+\mathrm k_{p}\dot{\mathbf p}_e^\mathrm{T} \mathbf p_e-2\mathrm{k_o}\dot \eta_e+\mathbf v^\mathrm{T} \mathbf F_{h}.
\end{split}
\end{equation} 
Utilising the property $\dot \eta_e=-\dfrac{1}{2}\vect{\epsilon}_e^\mathrm{T}\vect \omega_e$ where $ \vect \omega_e = \vect \omega_g - {\mathbf R}_e \vect \omega$ from \cite{koutras2021exponential} and substituting 
$\dot{\mathbf p}_e $ with $\dot{\mathbf p}_g-\dot{\mathbf p} $, \eqref{eq:dotV12} becomes: 

\begin{equation} \label{eq:dotV4} 
\dot V =-\mathbf v^\mathrm{T}\mathbf D_{d} \mathbf v+ \mathrm k_{p}\dot{\mathbf p}_g^\mathrm{T}\mathbf p_e+\mathrm k_{o} {\vect \omega}_g^\mathrm{T} \vect{\epsilon}_e+\mathbf v^\mathrm{T} \mathbf F_{h}.
\end{equation} 
Substituting $\dot{\mathbf p}_g$ and $ {\vect \omega}_g$ from \eqref{eq:dyn_god} yields:
\begin{equation} \label{eq:dotV42} 
\begin{split}
\dot V= &-\mathbf v^\mathrm{T}\mathbf D_{d} \mathbf v-\mathrm k_{r}\mathrm k_{p} f(a,\mathbf p_e) \mathbf p_e^\mathrm{T} \mathbf p_e\\&-\mathrm k_{r}\mathrm k_{o} f(a,\mathbf p_e) \vect{\epsilon}_e^\mathrm{T} \vect{\epsilon}_e+\mathbf v^\mathrm{T} \mathbf F_{h}.
\end{split}
\end{equation} 
Since  $f(a,\mathbf p_e)$ in not negative, and  $\mathrm k_{r}$ and $\mathrm k_{o}$ are positive, $\dot V$ can be written as:
\begin{equation} \label{eq:dotV55} 
\begin{split}
\dot V\leq &-\mathbf v^\mathrm{T}\mathbf D_{d} \mathbf v+\mathbf v^\mathrm{T} \mathbf F_{h}.
\end{split}
\end{equation} 
Therefore,  the inequality \eqref{eq:dotV55} implies that the system is strictly passive (refer to Definition 6.3 in \cite{Khalil_book}), thereby completing the proof of statement 1. 

2) Rewriting \eqref{eq:dotV55} by completing the squares, yields:

 	\begin{equation}\label{vdot_last_squares}
 	\begin{split}
 	\dot{V}=&-||\sqrt{\vct{D}_d} \mathbf v -\dfrac{1}{2}\sqrt{\vct{D}_d}^{-1} \vct F_{h} ||^2+\dfrac{1}{4}  \vct F_{h}^T \vct D_d^{-1} \ \vct F_{h} \\ &\leq \dfrac{\lambda_{max}(\vct D_d^{-1})}{4}  \vct F_{h}^\mathrm{T}  \ \vct F_{h}. 
 	\end{split}
 	\end{equation} 
  where $\lambda_{max}(.)$ is the largest eigenvalue of the matrix $"(.)"$. Notice that $\mathbf F_{h}$ represents the generalized force applied by the human
to guide the robot. The human forces have bounded energy. Hence integrating  \eqref{vdot_last_squares} we get:
\begin{equation} \label{eq:dotV57} 
 V\leq V(0)+ \dfrac{\lambda_{max}(\vct D_d^{-1})}{4} \int   \vct F_{h}^\mathrm{T} \vct F_{h} \ \partial t< \infty
\end{equation} which implies the boundedness of $V$.   Therefore, the boundedness of $V$  as given by  \eqref{eq:V5} indicates that  $\mathbf v$, $\mathbf p_e$ and $\Psi_e$ remain bounded under the exertion of human force. As a result,  deviations between the robot end-effector and the god-object pose are also bounded, completing the proof of statement 2.

\end{proof}

\subsection{Variable Damping} \label{subsec:var_dam}
The appropriate design of the damping term in the robot's admittance/impedance model is critical for ensuring a smooth and natural interaction experience, which is the focus of this subsection. Specifically, the damping matrix  $\mathbf{D}_d$  used in the robot end-effector admittance model \eqref{eq:adm_x2y} is constructed by combining a constant matrix   $\mathbf D_c \in \mathbb{R}^{6\times 6}$ and two variable matrices $\mathbf D_v=diag\{ d_{vp}
    \mathbf{I}_{3\times 3},   d_{vo} \mathbf{I}_{3\times 3}\}$, $\mathbf D_f=diag\{ d_{fp}
    \mathbf{I}_{3\times 3},   d_{fo} \mathbf{I}_{3\times 3}\}$, enhancing its ability to handle dynamic interaction scenarios. Therefore, the overall damping matrix is expressed as:
\begin{equation}\mathbf D_d = \mathbf D_c+\mathbf D_v + \mathbf D_f    \in \mathbb{R}^{6\times 6}.
\end{equation}
\subsubsection{Power-Based Variable Damping:}
Various formulations of the variable admittance matrix have been proposed in the literature. In this work, we employ a power-based variable damping matrix $\mathbf D_v$ proposed in \cite{sidiropoulos2021variable}.  The damping terms for translational and rotational motions are given by:
\begin{equation}
  d_{vp}=\mathrm a_p exp(-\mathrm b_p s_p), \  d_{vo}=\mathrm a_o exp(-\mathrm b_o s_o)  
\end{equation}
where $s_p= \mathrm{max}(0, \dot{\mathbf p}^\mathrm{T} \mathbf f_h )$ represents the positive power associated with the linear motion of the robot,  $s_o=\mathrm{max}(0, \vect \omega ^\mathrm{T} \vect\tau_h )$ is the positive power related to the rotational motion, and $\mathrm a_p, \mathrm a_o, \mathrm b_p , \mathrm b_o \in \mathbb{R}_>$ are scaling factors.

\subsubsection{Ergonomics-Based Variable Damping:}
In addition to power-based damping, the ergonomics factor is also considered in the damping design to prevent potential oscillations during the interaction. When $ \mathbf f(a,\mathbf p_e)$ deviates from one, the deviation  $\mathbf p_e$ between the god-object and admittance position may increase, raising the robot's resistance. This resistance can elevate the human's internal stiffness, potentially inducing oscillations.   To mitigate these effects, the variable damping terms $d_{fp}$ and $ d_{fo}$ are designed as:
\begin{equation}
 d_{fp}=\mathrm c_p ||\mathbf p_e||(1-f(a,\mathbf p_e)), \ 
  d_{fo}=\mathrm c_o ||\vect \epsilon_e||(1-f(a,\mathbf p_e))  
\end{equation}
where $\mathrm c_p, \mathrm c_o \in \mathbb{R}_>$ are positive gains. This formulation creates an adaptive damping mechanism that adjusts based on the deviation  magnitude and the value of the function $f(a,\mathbf p_e)$.

\subsection{Online Continuous Ergonomics Assessment}
The ergonomics factor $a$ is a critical component of the proposed postural virtual fixture. In order to calculate it, we first calculate the key joint angles necessary for computing it and then introduce a continuous approach for its calculation.

Let us assume we know the shoulder $\mathbf{S}_i$, elbow $\mathbf{E}_i$, wrist $\mathbf{W}_i$, neck $\mathbf{Ne}$,  middle of the thorax $\mathbf{Th}$, middle of the pelvis $\mathbf{Pl}$, and knee $\mathbf{Kn}$ keypoints, where $i=\mathrm{\{R,L\}}$ indicates the right or left side of the human body, respectively (Figure \ref{fig:human_angle}). This can be achieved using vision-based $3$D skeleton tracking algorithms, such as YOLO and MediaPipe, or using wearable motion capture sensors like the inertial measurement unit-based Xsens suit. Using these keypoints, the shoulder abduction/adduction angle $\theta^{i}_a$, shoulder flexion/extension angle $\theta^{i}_{f}$, shoulder internal/external rotation angle $\theta^{i}_{r}$, elbow flexion/extension angle $\theta^{i}_{e}$, and sagittal bending angle $\theta_{b}$ (see Figure \ref{fig:human_angle}) can be computed based on the geometric relationships between the joints. These calculations are provided in the \textit{Joint Motion Analysis} subsection of the Appendix.

 \begin{figure}[!t]
     \centering
       \includegraphics[width=0.42\textwidth]{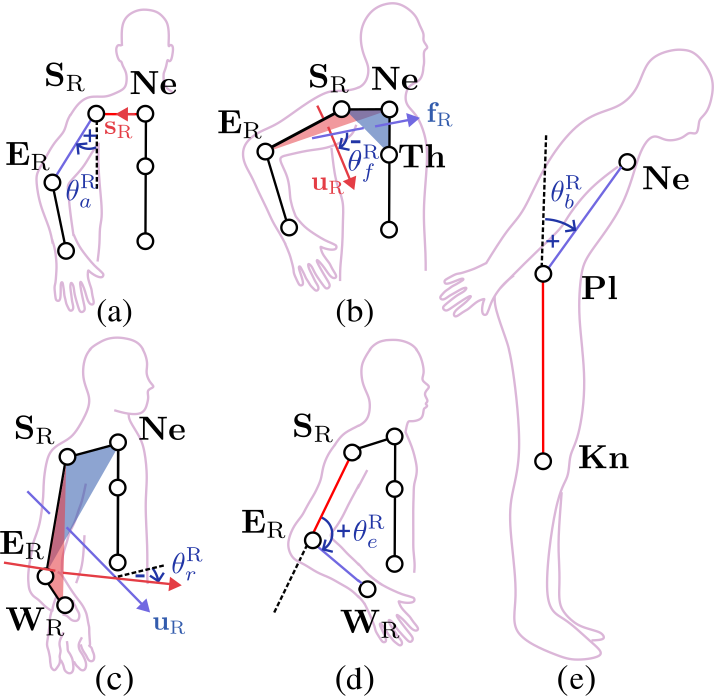}
        \caption{Analysis of human joint angles: (a) shoulder abduction/adduction, (b) shoulder flexion/extension, (c) shoulder internal/external rotation, (d) elbow flexion/extension, and (e) sagittal bending, across five movement scenarios. The figure illustrates key points identified by perception sensors, representing joint positions during each movement.}
         \label{fig:human_angle}
\end{figure}

The risk level associated with the angle of the upper arm, lower arm, and forward trunk bending joints, as indicated by the RULA worksheet \citep{mcatamney1993rula} is utilized to design the factor $a$. The RULA worksheet provides a discrete score based on the range in which each angle falls; then, using a score table, all the scores for each body part are gathered to compute a discrete total score that evaluates the ergonomics of the body posture. In our approach, we move away from this discrete method of posture assessment. Instead, we formulate a continuous function ranging from zero to one, with one indicating an ergonomic position and zero representing a high-risk, non-ergonomic posture. This continuous function allows for a smoother integration of the ergonomic assessment factor into the robot end-effector admittance model.
 Note that other bounded continuous ergonomic metrics, such as the joint displacement metric from \cite{lorenzini2022online} or the range of motion ergonomic cost from \cite{lagomarsino2023maximising}, could potentially be utilized in place of factor $a$ in the proposed virtual fixture dynamics. However, these metrics assume that the best ergonomic position of each joint is at the midpoint of its range of motion, which is not generally true for every joint. Therefore, our metrics, which are constructed based on the RULA assessment and specifically designed for online, posture-based ergonomics evaluation (facilitated by the SRB grounding the payload and simplifying the evaluation to posture quality), are considered more appropriate for assessing human posture ergonomics. Moreover, unlike our proposed ergonomic metric, these alternatives lack smoothness, which can lead to abrupt changes in control behavior.
Furthermore, fatigue-related metrics are unsuitable, as they may indicate ergonomic risk even when the user's current posture is ergonomic, due to the cumulative effect of previous non-ergonomic postures. This could hinder ergonomics recovery and task continuation.

 \begin{figure}[!t]
     \centering
       \subfloat[Ergonomics factor for shoulder  abduction/adduction angle.]{ \includegraphics[width=0.4\textwidth]{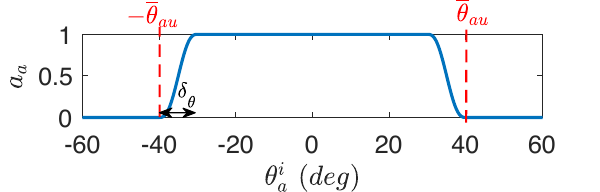}}
       \\
  \subfloat[Ergonomics factor for shoulder  flexion/extension angle.]{ \includegraphics[width=0.4\textwidth]{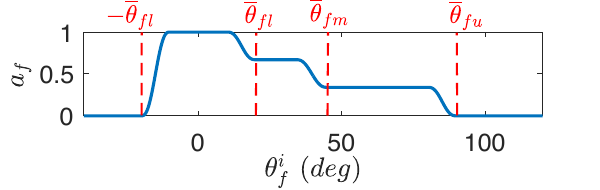}}
   \\
        \centering
       \subfloat[Ergonomics factor for shoulder   internal/external rotation angle.]{ \includegraphics[width=0.4\textwidth]{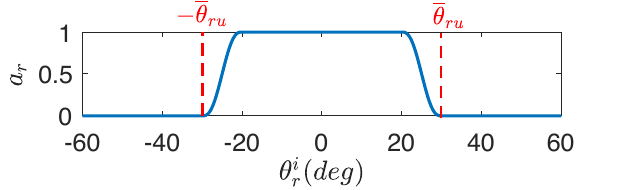}}
     \\
       \subfloat[Ergonomics factor for elbow flexion/extension angle.]{ \includegraphics[width=0.4\textwidth]{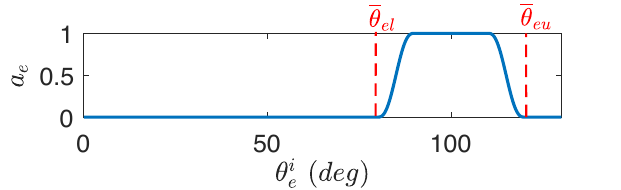}}
       \\
  \subfloat[Ergonomics factor for bending angle.]{ \includegraphics[width=0.4\textwidth]{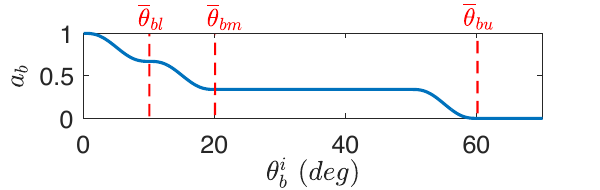}}
        \caption{Virtual representation of the ergonomics sub-factors.}
         \label{fig:task_des}
\end{figure}

For each angle calculated as described in the previous subsection, a function that calculates the posture ergonomics is designed based on the RULA worksheet. A virtual representation of these functions is presented in Figure  \ref{fig:task_des}. In particular,  for the shoulder abduction/adduction angle $\theta_a^i$, no specific joint limit is mentioned in the RULA worksheet; however, it is noted that less rotation is preferable. Thus, the following ergonomics factor is utilized for the angle $\theta_a^i$:
\begin{equation}
    \label{eq:a_a}
    a_a=h(|\theta_a^i|;\overline{\uptheta}_{au} -\updelta_\theta,\overline{\uptheta}_{au})
\end{equation}
where $\overline{\uptheta}_{au}$ denotes the upper ergonomic limit of the shoulder abduction/adduction angle, beyond which the posture is deemed non-ergonomic, resulting in $a_a = 0$. The ergonomics factor $a_a$ transitions smoothly from 1 to 0 within the range defined by $\updelta_\theta$. For this assessment, we set $\overline{\uptheta}_{au} = 30^\circ$ and $\updelta_\theta = 10^\circ$. 
For the shoulder flexion/extension $\theta_f^i$, a more ergonomic position is  when the angle is in a region around zero, and the risk level is increased as the angle is in an increased region; thus it is  designed as follows:
\begin{equation}
\label{eq:a_f} 
    a_f =
    \begin{cases} 
    \begin{aligned}
    &0.33h(\theta_f^i; \overline{\uptheta}_{fl} - \updelta_\theta, \overline{\uptheta}_{fl}) \\
    &+ 0.33h(\theta_f^i; \overline{\uptheta}_{fm} - \updelta_\theta, \overline{\uptheta}_{f2}) \\
    &+ 0.34h(\theta_f^i; \overline{\uptheta}_{fu} - \updelta_\theta, \overline{\uptheta}_{fu}),
    \end{aligned}
    & \text{if } \theta_f^i > 0 \\
    1 - h(\theta_f^i; -\overline{\uptheta}_{fl}, -\overline{\uptheta}_{fl} + \updelta_\theta),
    & \text{otherwise}
    \end{cases}
\end{equation}
where based on the RULA worksheet  $\overline{\uptheta}_{fl}=20^o$, $\overline{\uptheta}_{fm}=45^o$ and $\overline{\uptheta}_{fu}=90^o$.
For shoulder internal/external rotation $\theta_r^i$, 
no specific joint limit is mentioned in the RULA worksheet; however, it is noted that less rotation is preferable. Therefore, the ergonomics factor  is set as follows:
\begin{equation}
\label{eq:a_l} a_r=h(|\theta_r^i|;\overline{\uptheta}_{ru} -\updelta_\theta,\overline{\uptheta}_{ru})
\end{equation}
where the angle bound  $\overline{\uptheta}_{ru}$ is set to $\overline{\uptheta}_{ru}=30^o$ since the physical limit of absolute value of this shoulder angle is  $60^o$.

For elbow flexion/extension $\theta_e^i$, the factor is designed as: 
\begin{equation}
\label{eq:a_e}
    a_e=-h(\theta_e^i;\overline{\uptheta}_{el},\overline{\uptheta}_{el}+\updelta_\theta)+h(\theta_e^i;\overline{\uptheta}_{eu} -\updelta_\theta,\overline{\uptheta}_{eu})
\end{equation}
where based on the RULA worksheet  $\overline{\uptheta}_{el}=80^o$ and $\overline{\uptheta}_{eu}=120^o$.

For the bending angle $\theta_b$, the ergonomics factor RULA worksheet states that only a zero-degree angle is considered ergonomic.  To have a smooth transition, we add an extra range of motion around the zero-angle in addition to what is defined in the RULA worksheet. Consequently,  the ergonomics factor is defined as:
\begin{equation}
\label{eq:a_b}
\begin{split}
a_b=&0.33h(\theta_b;\overline{\uptheta}_{bl} -\updelta_\theta,\overline{\uptheta}_{b1})+0.33h(\theta_b;\overline{\uptheta}_{bm} \\&-\updelta_\theta,\overline{\uptheta}_{bm})+0.34h(\theta_b;\overline{\uptheta}_{bu} -\updelta_\theta,\overline{\uptheta}_{bu}).
\end{split}
\end{equation}
where $\overline{\uptheta}_{bl}=10^o$, $\overline{\uptheta}_{bm}=20^o$, and $\overline{\uptheta}_{bu}=60^o$.

To calculate the total ergonomics assessment score $a$, all sub-ergonomics factors are multiplied:
\begin{equation}\label{eq:a_t}
a = a_a a_f a_l a_e  a_b \in [0 \ 1].
\end{equation}
By doing this,  when at least one of them is at high risk, i.e., is zero,  the total score is zero, indicating a high-risk non-ergonomic posture.
\begin{remark} In this work, we consider the shoulder, elbow, and trunk bending angles. If the task requires the inclusion of additional angles, such as those of the wrist and legs, the ergonomics functions designed for the shoulder, elbow, and trunk can similarly be applied to these other angles. The total factor is then adjusted by multiplying it with the ergonomics factors corresponding to each of these angles. \end{remark}

\subsection{Supernumerary Robotic Body Control with Base Adaptation} \label{subsubsec:inv_k}
The final element of the control framework (Figure \ref{fig:block-diagram}) is an SRB with loco-manipulation capabilities, providing user support. This subsection addresses the closed-loop inverse kinematics with redundancy resolution for SRBs, ensuring the desired end-effector behavior \eqref{eq:adm_x2y} and coordination between the user and robot. The system integrates robotic platforms with $n \geq 6$ degrees of freedom and a mobile base, and an interface,  proposed in \cite{giammarino2024super}, incorporating end-effectors like the Pisa/IIT SoftHand or vacuum grippers and an F/T sensor measures human-exerted forces, $\mathbf{F}_h$, for the ergonomics-driven model \eqref{eq:adm_x2y}. The interface decouples human forces from environmental ones, enabling the handling of heavy loads with minimal effort using high-payload manipulators, such as the UR16e ($16$ kg capacity).

Solving the inverse kinematics problem is crucial for translating the desired end-effector behavior, as defined by \eqref{eq:adm_x2y}, into precise SRB joint velocities $\dot{\mathbf{q}}_p \in \mathbb{R}^{3+n}$. These velocities are obtained using the Damped Least-Squares (DLS) method, detailed in the \textit{DLS for Whole-Body Inverse Kinematics} subsection of the Appendix. The velocity vector $\dot{\mathbf{q}}_p$ represents the complete joint velocity configuration of the robot, defined as  $\dot{\mathbf{q}}_p = [\dot{\mathbf{q}}_b^\mathrm{T} \ \dot{\mathbf{q}}_a^\mathrm{T}]^\mathrm{T} \in \mathbb{R}^{3+n}$, where $\dot{\mathbf{q}}_b = [v_x \ \ v_y \ \ \omega_z]^\mathrm{T} \in \mathbb{R}^3$ corresponds to the floating-base velocities, and $\dot{\mathbf{q}}_a \in \mathbb{R}^n$ represents the manipulator joint velocities.

The SRB's redundancy can be leveraged to improve coordination, especially when the operator’s motion is constrained by the floating base. To address this, we use the Jacobian's null space $\mathcal{N} \in \mathbb{R}^{\{3+n\}\times \{3+n\}}  $, provided in the  \textit{DLS for Whole-Body Inverse Kinematics} subsection of the Appendix, to adjust the base’s movement as the operator approaches a threshold. The total commanded joint velocity is:
\begin{equation} \dot{\mathbf{q}}_c = \dot{\mathbf{q}}_p + \mathcal N \mathbf{v}_r \in \mathbb{R}^{3+n} \end{equation}
where $\mathbf{v}_r = [\mathbf{v}_{xy}^\mathrm{T} \ \mathbf{0}_{(1+n)}]^\mathrm{T} \in \mathbb{R}^{3+n}$ with  $\mathbf{v}_{xy} \in \mathbb{R}^2$ be repulsive velocity component that introduced into the null space of the Jacobian when the user is near the base. Then, the velocity $\dot{\mathbf{q}}_c$ is integrated based on the robot’s motion input (joint velocities or positions).

\begin{figure}[!t]
     \centering
\includegraphics[width=0.75\linewidth]{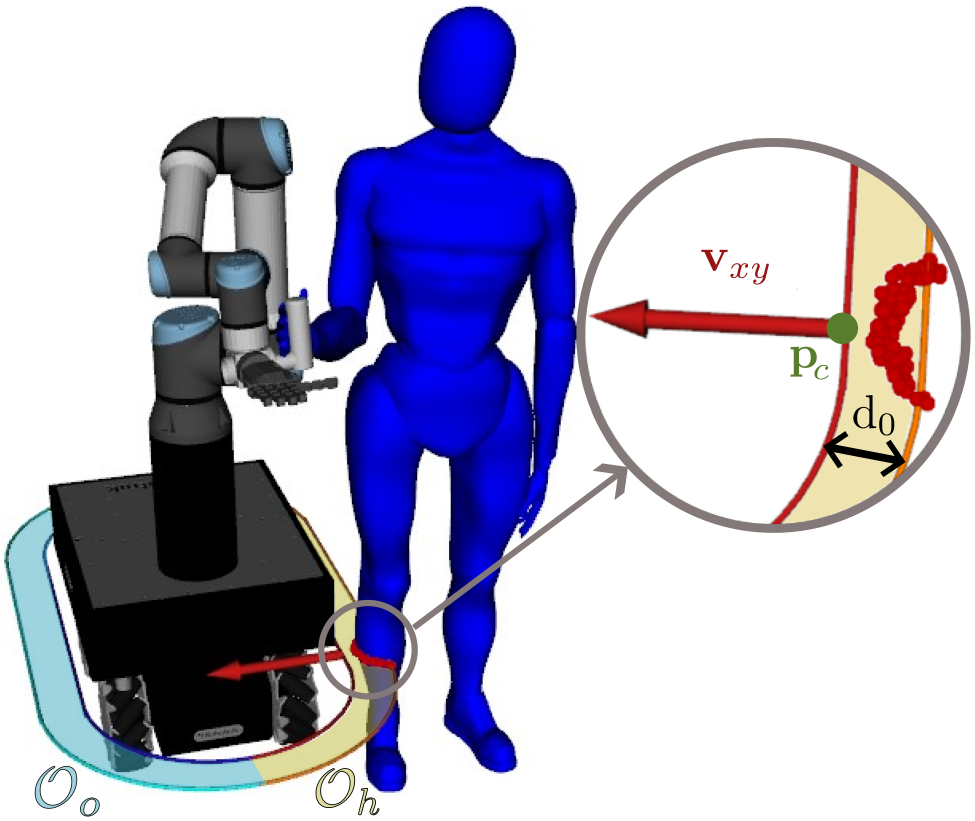}
        \caption{Visualization of the repulsive vector when the user is in close proximity to the robotic platform.
        }
         \label{fig:nullspace}
\end{figure}
To calculate the repulsive velocity $\mathbf{v}_{xy}$, we drew inspiration from \cite{sirintuna2024enhancing}, where the goal was for a human-mobile manipulator team to avoid obstacles in the environment during a co-transportation task. Without loss of generality, we consider as a floating base the robotic mobile platform shown in Figure \ref{fig:nullspace}. In contrast to that previous work, which filtered the velocity generated from an admittance model based on whether the robot-human team is expected to collide with an obstacle—thereby constraining the robot's motion to ensure safety—our current approach involves designing a vector that repels the robot platform when a human approaches its base in a distance less than $\mathrm {d}_{0}  \in
    \mathbb{R}$.   To implement this, we first enclose the robot in a capsule with radius $\mathrm{r}_c \in \mathbb{R}^3$ and length $\mathrm{L} \in \mathbb{R}$, corresponding to the robot's dimensions.  Following this, we fill the empty space created by the LiDAR-detected point cloud by drawing circles with a radius 
$\mathrm{r}_s \in \mathbb{R}$ around these points.  By doing so, we can identify the region of interest, which is the area close to the capsule, defined  as: \begin{equation}
    \mathcal{O}_c=\{\mathbf x\in \mathbb R^2: \mathrm{r}_c+\mathrm{r}_s \leq ||\mathbf x||<\mathrm  {d}_{0}  +\mathrm{r}_s+\mathrm{r}_c\}.
\end{equation} 
\begin{remark}
       For simplicity, the robotic platform is enclosed within a capsule-shaped boundary. However, if a more precise fit is needed, other shapes, such as 2D superellipses, could be employed. These shapes have been shown to be effective for obstacle avoidance in autonomous robotic tasks involving fixed robotic manipulators \citep{stavridis2017dynamical}.
\end{remark}
 
 To form the repulsive velocity, we consider the region that corresponds to half of the capsule where a human is expected to interact with the robot end-effector, denoted as 
$\mathcal{O}_h\subset\mathcal{O}_c$. When points are detected in the opposite half $\mathcal{O}_o=\mathcal{O}_c-\mathcal{O}_h$, the null space control is deactivated to ensure that the environment remains safe from potential collisions and hazards. To ensure smooth behavior of the repulsive velocity when points are detected only in the half capsule closest to the human $\mathcal{O}_h$,
instead of considering just the closest point, we compute a
weighted average of all nearby points:  
\begin{equation}
    \bar{\mathbf p}=\dfrac{\sum_{ \mathbf p_i\in  \mathcal{ O}_h } w_i \mathbf p_i^c}{\sum_{ \mathbf p_i\in  \mathcal{ O}_h } w_i } \in \mathbb{R}^2, \ w_i = h(d_{xi}; 0,{\mathrm {d}_{0} })  \in \mathbb{R}
\end{equation}
where  $\mathbf p_i^c \in \mathbb{R}^2$  is the closest point on the perimeter
of the capsule to $\mathbf p_i \in  \mathcal{ O}_h $  that can be calculated analytically  (see Appendix) and $d_{xi} = ||\mathbf{p}_i - \mathbf{p}_i^*|| - (\mathrm{r}_c + \mathrm{r}_s)\in [0  \ \mathrm{d}_{0}  )$ with $\mathbf{p}_i^*$ is the nearest point on the capsule's centerline to $\mathbf{p}_i$, also calculable analytically   (see subsection \textit{Closest Point on a 2D Capsule} in the Appendix). This weighting scheme ensures that points closer to the robot have a greater influence on $\bar{\mathbf p}$. Then, this point is projected onto the capsule perimeter as:
\[
\mathbf{p}_c = \bar{\mathbf{p}}^* + \frac{\bar{\mathbf{p}} - \bar{\mathbf{p}}^*}{\|\bar{\mathbf{p}} - \bar{\mathbf{p}}^*\|} \mathrm{r}_c    \in \mathbb{R}^2 , 
\]
where $\bar{\mathbf{p}}^* \in \mathbb{R}^2$ is  the nearest point on the capsule centerline to $\bar{\mathbf{p}}$.
Finally, The repulsive force vector $\mathbf{v}_{xy}$ is given by:
\begin{equation}
    \mathbf v_{xy}=k_v \mathbf R^b_w\dfrac{ \mathbf p_c^* - \mathbf p_c}{||\mathbf p_c^*  -\mathbf p_c||} \in \mathbb{R}^2
\end{equation}
where ${\mathbf{p}}^*_c \in \mathbb{R}^2$ is the nearest point on the capsule line to ${\mathbf{p}}_c$, $\mathbf R^b_w$  is the 2D matrix mapping the x-y axis vectors expressed in the world frame to the mobile robot's base frame, and $k_v =\mathrm a_k h\left( \min\limits_{\mathbf{p}_i \in \mathcal{O}_h} \{ d_{xi} \}; 0, \mathrm{d}_{0}  \right)
$ with $\mathrm a_k \in \mathbb{R_>} $  being a gain factor. Notice that $k_v$ is a gain that increases as the distance between the human and the platform decreases.

\begin{remark}
Here, we consider a predefined area $\mathcal{ O}_h$  with respect to the mobile platform where the user is expected to interact with the robot. For more flexible solution leg detection methods utilizing 2D LiDARs \citep{leigh2015person}, could also be utilized in order  to adapt the area $\mathcal{ O}_h$.
\end{remark}

\section{EXPERIMENTS}
To validate the functionality and effectiveness of the proposed control framework, with respect to ergonomic performance and user interaction, we conducted two multi-subject experiments. 
The first, referred to as the \textit{Prolonged Manipulation Task} (Figure \ref{fig:s1}), aimed to quantify the ergonomics improvement enabled by the proposed postural virtual fixtures and to evaluate the overall user experience in comparison to a baseline admittance model. 
The second experiment, termed the \textit{Long Distance Loco-Manipulation Task} (Figure \ref{fig:exp2}), involved lifting and carrying a load over an extended distance. This task was designed to validate the ergonomics-driven robot whole-body adaptation during co-manipulation and to determine whether the conditional activation of the base-prioritization functionality promote ergonomic behavior during extended loco-manipulation, even in the absence of kinesthetic feedback.
The video of the presented experiments can be found in the multimedia attachment or at \href{https://youtu.be/fZYz6XOJ1Po}{https://youtu.be/fZYz6XOJ1Po}.

\subsection{Experimental Setup}

Both experiments used the Kairos mobile manipulator, which consists of a Robotnik SUMMIT-XL STEEL mobile platform paired with a 6 degrees of freedom (DoFs) Universal Robot UR16e manipulator. 
For grasping tasks, the Pisa/IIT SoftHand was attached as the end-effector. 
Additionally, a 3D-printed handle integrating an F/T sensor was positioned near the SoftHand to measure human forces. Mounted on top of the handle was an M5Stack interface, which allowed participants to control various control modalities including the switching between \textit{arm prioritization} mode (setting $\mathbf{W_q}= diag\{  10^4\mathbf I_{3\times3},  \mathbf I_{6\times 6}\} $, with the operator ${diag}\{ ...\}$  used to represent block diagonal matrices) and \textit{base prioritization} mode (setting $\mathbf{W_q}=diag\{ \mathbf I_{3\times3},  \mathbf I_{6\times 6}\} $) by touch, thus prioritizing either the arm joints or the DoFs of the mobile base, respectively. 
The mode switch was manually operated by the user for intuitive control based on immediate needs. While future work could explore autonomous switching via human action recognition (e.g., detecting motion size), this was not the focus of the proposed control framework. Additionally, the touch user interface enabled users to control the open or closed status of the SoftHand for grasping objects. The robotic base is also equipped with two 2D SICK Microscan LiDAR sensors to detect the proximity of the human or obstacles on the x-y plane.

To demonstrate that the proposed methodology is independent of the skeleton tracking approach, a
preliminary experiment was conducted using a single camera, while the remainder of the experimental
campaign relied on an IMU-based suit. However, a direct comparison between these motion capture
methods is beyond the scope of this work. For the single-camera setup,  an RGB-D (Intel RealSense) camera monitored operator motion at approximately 20 Hz. In that case,  we exploited YOLO algorithm to track the human skeleton and compute joint angles and the ergonomics factor as described in the previous section. However, YOLO did not provide positions for the neck ($\mathbf {Ne}$), the pelvis ($\mathbf {Pl}$), and the middle of the thorax ($\mathbf {Th}$). We calculated $\mathbf{Ne}$ as the midpoint between the shoulder points $\mathbf{S_R}$ and $\mathbf{S_L}$, $\mathbf{Pl}$ as the midpoint of the hips, and $\mathbf{Th}$ as the point two-third of the way between $\mathbf{Ne}$ and $\mathbf{Pl}$. For tasks requiring greater spatial coverage, the limited field of view and susceptibility to occlusions of a single camera necessitated more stable tracking. To address this, we then employed a wearable MVN Biomech suit (Xsens Tech.BV), which relies on inertial measurement unit (IMU) sensors, to track the skeleton keypoints. This system ensured reliable tracking even during occlusions or large displacements that could move participants out of the camera’s view, hence, contributed to a more consistent evaluation of the proposed algorithm in our multi-subject studies.

\begin{figure}[!t]
    \centering     \includegraphics[width=0.35\textwidth]{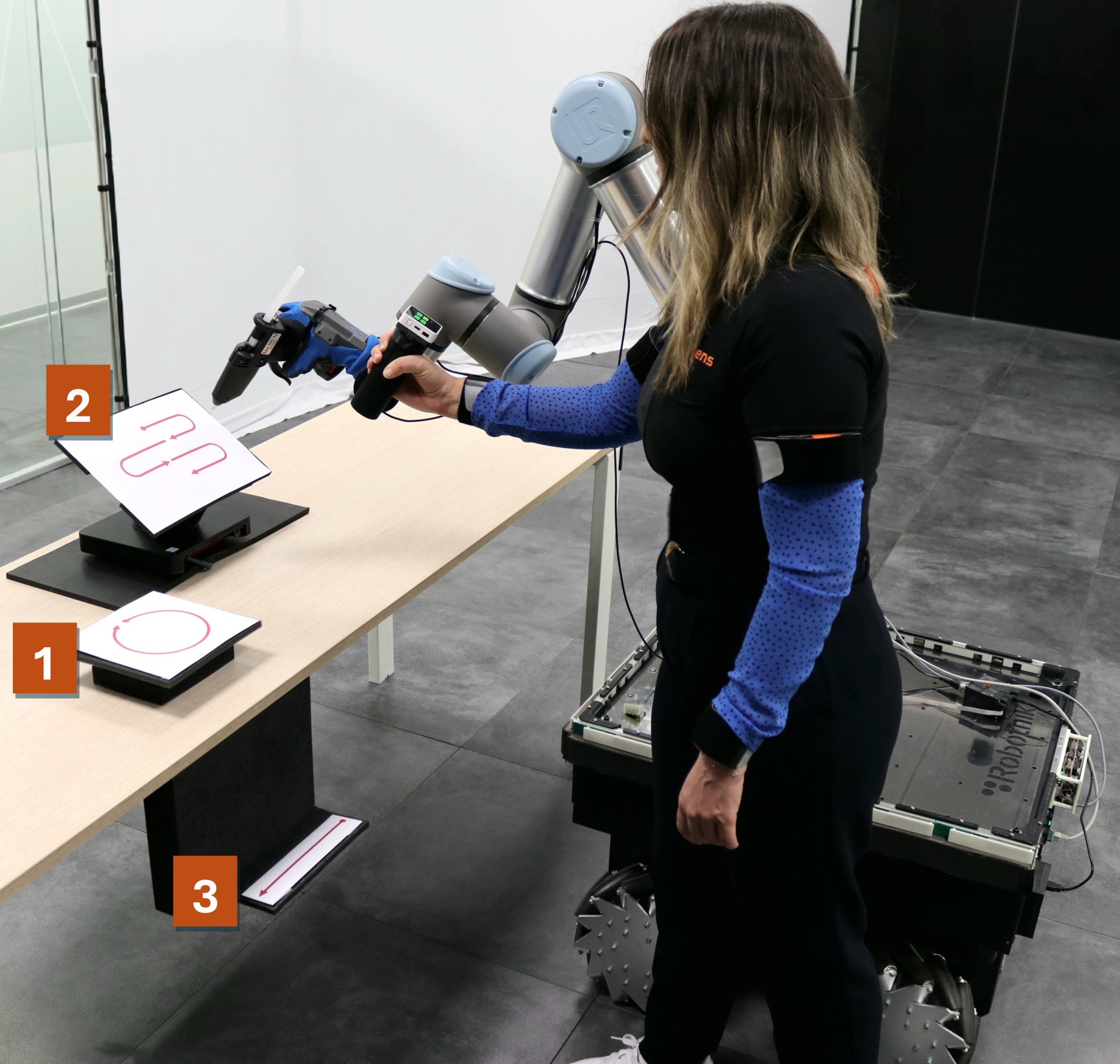}
    \caption{Setup of the \textit{Prolonged Manipulation Task}, including the human operator, SRB, glueing tool, Xsens devices, and target tracking paths with numbers indicating the order.}
    \label{fig:s1}
\end{figure}

\begin{figure}[!t]
    \centering 
\includegraphics[width=0.85\linewidth]{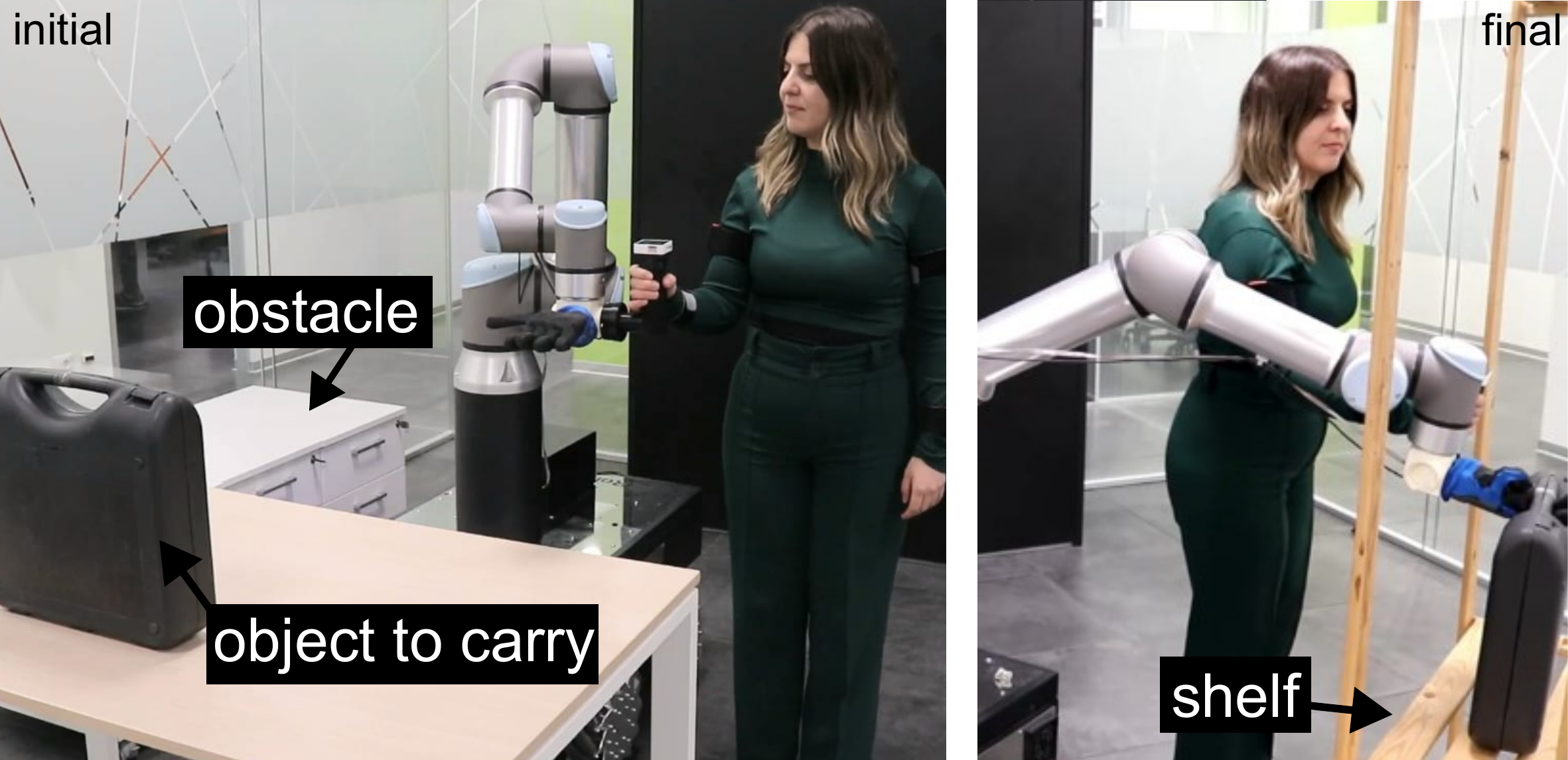}
\hspace{0.005\textwidth}
    \caption{Setup of the \textit{Long Distance Loco-Manipulation Task}, involving lifting a load, carrying it, and repositioning it onto a shelf. Note that the chests of drawers serve as obstacles in the environment.}
    \label{fig:exp2}
\end{figure}

\subsection{Experimental Protocol}

The \textit{Prolonged Manipulation Task} involved completing a series of target-tracking paths arranged in various configurations on a table, five consecutive times. This task was designed to showcase the potential of the ergonomics-driven pHRI module in tasks that require prolonged and precise arm movements. Such tasks, commonly performed with low- or medium-weight tools (e.g., for gluing or polishing), often lead to non-ergonomic postures and pose long-term health risks. Participants were instructed to complete three target paths in the sequence shown in Figure \ref{fig:s1}, with the SRB grasping a hotmelt glue gun. To do so, the users exerted forces at the SRB handle to guide the tool through the target paths. 
For this task,  the \textit{arm prioritization mode} was utilized with the parameter values of the proposed control scheme presented in Table \ref{table:parameters}. 
The parameters of the proposed control framework were selected to balance performance and user intuitiveness, ensuring responsiveness while maintaining ergonomic comfort. Considering static motion, the stiffness matrix has been defined in order to establish a proportional relationship: a 2 cm deviation from the last ergonomic posture in translational space results in 12 N force feedback, while a 4 $^o$ deviation in rotational space produces 1.4 Nm torque feedback. The damping values were experimentally determined to allow for smooth responses during rapid movements while minimizing oscillations when the ergonomic factor is zero. Similarly, the inertia values of the admittance model were chosen experimentally to provide a compliant, responsive, and intuitive user experience. This selection process was conducted before the main experiments using a subject outside the user study.  For the god object, the motion dynamics gain was tuned to achieve fast convergence at an ergonomic factor of 1, preventing motion constraints when the human posture is ergonomic.

In the \textit{Long Distance Loco-Manipulation Task}, the operator collaborated with the SRB to lift and transport a load across a larger workspace, as illustrated in Figure \ref{fig:exp2}. They were instructed to approach the box, grasp it, carry it to the designated drop location, and place it on a shelf, while obstacles were also present in the surrounding environment. 
To facilitate both precise manipulation and long-distance movement, the touch user interface was used to switch between \textit{arm prioritization mode} and \textit{base prioritization mode}. Transition from \textit{arm prioritization mode} to \textit{base prioritization mode} was permitted only when the user's ergonomics factor exceeded a predefined threshold ($a_{th} = 0.5$). During the \textit{base prioritization mode}, posture-based kinesthetic feedback was deactivated by setting $\mathbf{K}_d = 0$, as providing such feedback during locomotion was found to be potentially disruptive. Instead of offering real-time postural guidance while walking, the system ensured that switching occurred only when the user was already in an ergonomic posture, thereby promoting the adoption and maintenance of ergonomic postures throughout the task. The underlying hypothesis was that promoting ergonomics during manipulation could also positively influence posture during the \textit{base prioritization mode}, exploiting the natural tendency of humans to prioritize walking without significantly altering arm configurations during long transportation tasks. Since, according to the literature \citep{lamy2009achieving, lecours2012variable}, the minimum allowed target inertia value depends on the actual inertia of the physical plant in order for the overall system to remain passive, and because prioritizing the robotic base joints increases the actual inertia, an increase in the corresponding admittance gains is required. Therefore, we set the admittance matrices $\mathbf M_d$ and  $\mathbf{D}_c$ as given in Table \ref{table:parameters2} with the remaining parameters set to the same values as in the \textit{arm prioritization mode}.

\begin{table}[!t]
\small
    \caption{Value of parameters used in the experimental setup.}
    \label{table:parameters}
    \def \arraystretch{1.1}
        
    \begin{tabular}{p{0.5cm} p{3cm}}
    \hline
        \multicolumn{2}{l}{\hspace{-2mm}\textbf{Robot and God-object Motion}}\\
    \hline
        \hspace{-2mm}$\mathbf{M}_{d}$ & $diag\{5 \, \mathbf I_{3\times 3}, 0.25 \, \mathbf I_{3\times 3}\}$ \\ 
        \hspace{-2mm}$\mathbf{K}_{d}$        & $diag\{\mathrm 600 \mathbf I_{3\times 3}, 40 \, \mathbf I_{3\times 3}\}$ \\
        \hspace{-2mm}$\mathrm {k}_{r}$       & 200\\
        \hspace{-2mm}$\mathrm {m}$           & 0.5 \\ 
        \hspace{-2mm}$\mathrm {k}_{n}$       & $10^3$  \\  
        \hspace{-2mm}$\mathrm{\delta_n}$     & 5 deg \\
        \hspace{-2mm}$\overline{\upphi}_n$ & 55 deg \\
    \hline
    \end{tabular}
    \hfill
    \begin{tabular}{p{0.5cm} p{2.6cm}}
    \hline
        \multicolumn{2}{l}{\hspace{-2mm}\textbf{Damping term $\mathbf D_d$}}\\
    \hline
        \hspace{-2mm}$\mathbf{D}_{c}$ & $diag\{20 \, \mathbf I_{3\times 3}, \mathbf I_{3\times 3}\}$ \\
        \hspace{-2mm}$\mathrm a_p$  & 20 \\
        \hspace{-2mm}$\mathrm a_o $ & 4 \\
        \hspace{-2mm}$\mathrm b_p$  & 1\\
        \hspace{-2mm}$\mathrm b_o$  & 5\\
        \hspace{-2mm}$\mathrm c_p$  & 2500 \\
        \hspace{-2mm}$\mathrm c_o$  & 20 \\
    \hline
    \end{tabular} 
    \vspace{0.02cm}
\centering    
    \begin{tabular}{p{0.8cm} p{3.7cm}}
    \hline
        \multicolumn{2}{l}{\hspace{-2mm}\textbf{SRB Control Framework}}\\
    \hline
        \hspace{-2mm}$\mathrm{n}$       & 6 \\
        \hspace{-2mm}$\mathbf {K}_x$    & $diag\{0.01 \, \mathbf I_{1\times3}, 0.004 \ \mathbf I_{1\times 3}\}$  \\
        \hspace{-2mm}$\mathbf {W}_x$    & $10^3 \, \mathbf{I}_{6 \times 6}$   \\
        \hspace{-2mm}$\mathrm r_c$      & 37.5 cm \\   
        \hspace{-2mm}$\mathrm r_s$      & 2 cm\\ 
        \hspace{-2mm}$\mathrm {d}_{0} $ & 10 cm   \\
        \hspace{-2mm}$\mathrm L$        & 31 cm \\
        \hspace{-2mm}$\mathrm a_k$      & 0.11 \\
    \hline
    \end{tabular} 
\end{table}

\begin{table}[!t]
\small
    \caption{Parameter values during mobile \textit{base prioritization}.}
     \label{table:parameters2}
    \centering
    \begin{tabular}{p{0.8cm} p{3.7cm}} 
    \hline
    \multicolumn{2}{l}{\hspace{-2mm}\textbf{Adjusted parameter values}}\\
     \hline
     \hspace{-2mm}$ \mathbf{M}_{d}$ & $diag\{15\, \mathrm I_{3\times 3}, 0.5 \, \mathrm I_{3\times 3}\}$ \\
    \hspace{-2mm}$ \mathbf{D}_{c}$  & $diag\{40\mathrm \, \mathrm I_{3\times 3}, \mathrm I_{3\times 3}\}$ \\
    \hspace{-2mm}$ \mathbf{K}_{d}$  & $\mathbf 0_{6\times 6}$ \\
    \hspace{-2mm}$ \mathrm{c_p}, \ \mathrm{c_o}$ & $0$ \\
     \hline
    \end{tabular}
\end{table}

Fourteen healthy participants ($9$ males and $5$ females, aged $28.7 \pm 2.9$ years), all right-handed, were recruited for the user study.
The experimental campaign was carried out at the Human-Robot Interfaces and Interaction (HRII) Laboratory of IIT, in accordance with the revised Declaration of Helsinki. The protocol was approved by the ethics committee Azienda Sanitaria Locale (ASL) Genovese N.3 (Protocol IIT\_HRII\_ERGOLEAN 156/2020).

Both tasks were repeated twice by each participant under two different conditions: Postural Virtual Fixtures (PVFs) and Baseline (B). In the PVFs condition, kinesthetic feedback was provided to promote postures that avoided non-ergonomic configurations in \textit{arm prioritization mode}, and switching to \textit{base prioritization mode} was permitted only when the participant's posture met the ergonomic threshold ($a \geq a_{th}$). In the B condition, the system operated without integrating human posture in the control loop. To achieve this, the ergonomics factor $a$ was fixed at $1$ throughout the experiments. This caused the god-object to follow the desired end-effector motion, thereby not providing posture-based kinesthetic feedback. Mode switching from \textit{arm prioritization} to \textit{base prioritization} was always possible for the user, and there was no check on the user's posture.  
Before the main trials, participants underwent a familiarization phase with the admittance interfaces without being informed of their specific functionalities, allowing them to interpret the system's feedback on their own. To minimize learning effects and cumulative workload that could affect the statistical results, the order of the two conditions was randomized, and a break was provided between sessions.

\subsection{Measurements and Derived Metrics}

This subsection outlines the measurements and the metrics used to assess the proposed control framework. Statistical analysis was performed to compare our results with those obtained from the baseline framework (i.e., PVFs condition vs B condition) and identify any significant differences between the two. Initially, the gathered data were tested for normality using the Anderson-Darling test. For normally distributed data,  paired t-tests were employed for pairwise comparisons, and repeated measures ANOVA was used for metrics involving multiple repetitions, such as the five iterations per participant in the \textit{Prolonged Manipulation Task}. For data that did not meet the normality assumption, non-parametric alternatives were applied, including the Wilcoxon signed-rank test for pairwise comparisons and the Friedman test for repeated measures, to identify significant differences.

\subsubsection{Ergonomics Metrics:}
To assess overall ergonomic performance during the tasks, the mean ergonomics factor $\bar{a}$ was computed as the average value of the factor $a$ in \eqref{eq:a_t} over the entire task duration:
\begin{equation}
    \bar{a} = \frac{1}{T_k} \sum_{k=1}^{T_k} a[k],
\end{equation}
where $k$ represents the time samples and $T_k$ is the total number of samples during the task.

The percentage of non-ergonomic time $\zeta_{ne}$ was calculated to assess the effectiveness of the proposed method in promoting the avoidance of prolonged non-ergonomic postures, which are associated with increased health risks. This metric is expressed as:
\begin{equation}
    \zeta_{ne} = \left(\frac{1}{T_k} \sum_{k=1}^{T_k} \mathbb{I}(a[k] = 0) \right) \times 100,
\end{equation}
to quantify the proportion of the task duration spent in non-ergonomic postures ($a = 0$). $\mathbb{I}(\cdot)$ is the indicator function, which equals $1$ when the condition in parenthesis is true, i.e., in this case $a[k] = 0$, and $0$ otherwise.

To capture how frequently the user transitioned into non-ergonomic postures, the non-ergonomic boundary touch count $\beta$ was calculated. This metric counts the instances when the ergonomics factor dropped from a valid ergonomic state to a non-ergonomic state, defined as:
\begin{equation}
    \beta = \sum_{k=2}^{T_k} \mathbb{I}(0 < a[k-1] \leq 1 \, \land \, a[k] = 0).
\end{equation}
Additionally, the percentage of time the robotic base was within a critical proximity threshold was computed to estimate potential interference with human manipulation. This metric is expressed as:
\begin{equation}
    \zeta_{d} = \left(\frac{1}{T_k} \sum_{k=1}^{T_k} \mathbb{I}(\mathrm{d}[k] < \mathrm{d}_{0}) \right) \times 100,
\end{equation}
where $\mathrm{d}[k]$ is the distance on the horizontal plane between the user's leg and the robotic base at each time step $k$, and $\mathrm{d}_{0}$ is the defined proximity threshold.

\subsubsection{Subjective Rating Scales:}

At the end of the experiments, participants were asked to complete a custom questionnaire (Table \ref{table:custom_quest}) designed to evaluate their experience with the SRB assistance across the two testing conditions. To mitigate bias, we counterbalanced positive and negative items, assessing key factors such as the promotion of postural ergonomics, learning effects, control freedom, and the effectiveness of null-space adaptation. Selected items from the System Usability Scale (SUS) \citep{lewis2018system} were also included in the questionnaire to examine whether the PVFs impacted the system's usability. All feedback was collected using a five-point Likert scale. Additionally, the NASA Task Load Index (NASA-TLX) questionnaire \citep{hart2006nasa} was used to evaluate participants' perceived workload.

\begin{table}[!h]
\small
    \caption{}
    \label{table:custom_quest}
    \def \arraystretch{1.1}
    \centering
    \begin{tabular}{p{0.4cm}p{7cm}}
        \hline
        \multicolumn{2}{l}{\textbf{Custom Questionnaire }}\\
        \hline
        \textit{Q1}: & The interface effectively helped me adopt and maintain an ergonomic posture during the task.\\
        \textit{Q2}: & It was challenging to recognize when I was close to a non-ergonomic posture.\\
        \textit{Q3}: & It was easy to correct my posture to complete the task ergonomically.\\ 
        \textit{Q4}: & After a few repetitions, I had learned to be more ergonomic during the task.\\ 
        \textit{Q5}: & I felt in control of the system while performing the task.\\  
        \textit{Q6}: & The robotic mobile base interfered with my movement and task completion.\\ 
        \textit{Q7}: & I was concerned that the robotic mobile base could collide with obstacles, e.g., walls.\\ 
        \hline
    \end{tabular}
    \begin{tabular}{p{0.4cm}p{7cm}}
        \hline
        \multicolumn{2}{l}{\textbf{System Usability Scale (SUS)}}\\
        \hline
        \textit{SUS1}: & I found the various functions of the system were well integrated.\\ 
        \textit{SUS2}: & I thought the interface was easy to use.\\ 
        \textit{SUS3}: & I felt very confident using the system.\\ 
        \hline
    \end{tabular}
\end{table}

\section{EXPERIMENTAL RESULTS}

\subsection{Experiment 1: Partial Prolonged Manipulation Task -  RGBD Camera Skeleton Tracking}

To showcase the functionality of the proposed control scheme and highlight its flexibility to different motion
capture systems, we first used vision-based skeleton tracking to calculate the ergonomics factor.  A right-handed  expert user performed one subsection of the gluing task, following the third target path, which was positioned at a low height.  The results are illustrated in Figures \ref{fig:pathcam} and \ref{fig:cam_res}.
\begin{figure}[!t]
     \centering \includegraphics[width=0.43\textwidth]{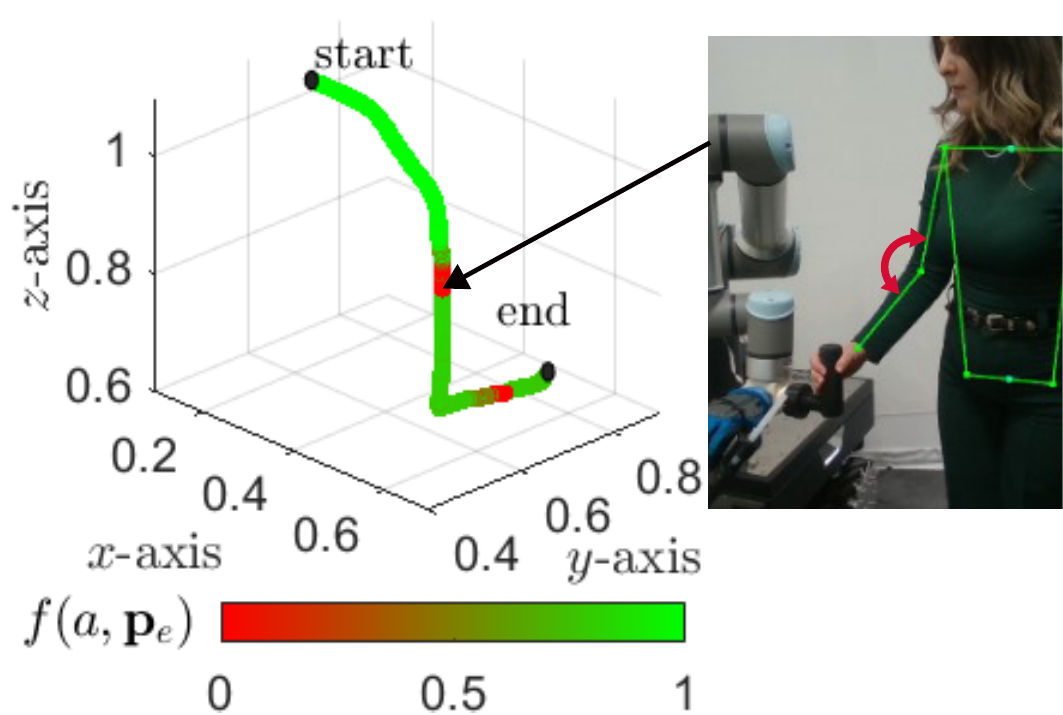}
\caption{Visualization of the robot end-effector trace with color gradients representing the values of $f(a,\mathbf p_e)$, alongside an RGB-D camera image processed with YOLO for skeleton detection (Experiment 1), marking a time-stamped instance when the user's posture was non-ergonomic due to elbow extension.}
\label{fig:pathcam}
\end{figure}
\begin{figure}[!t]
    \centering    
    \subfloat[From top to bottom, the plots display the human joint angles ($\theta_r, \theta_e, \theta_b$) with their corresponding ergonomic ranges, the ergonomics factor $a$, the norm of the deviations of the robot's position from the god-object position ($||\mathbf p_e||$), and the norm of the human-exerted force ($||\mathbf f_h||$).]{ 
\includegraphics[width=0.45\textwidth]{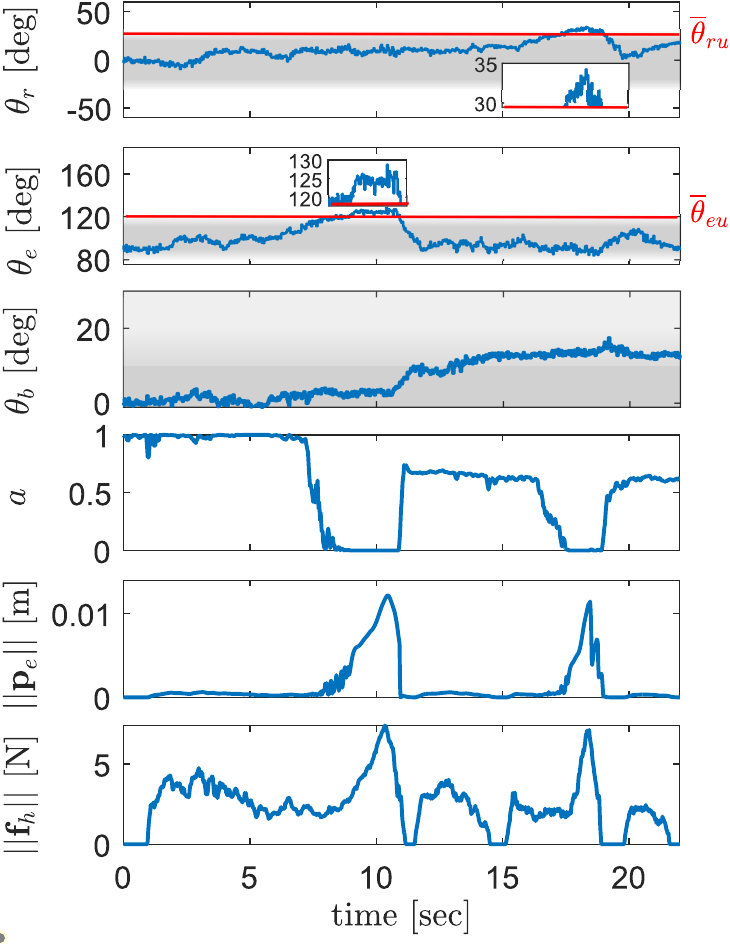}}
    \\ 
    \subfloat[Visualization of the damping components.]{ \includegraphics[width=0.36\textwidth]{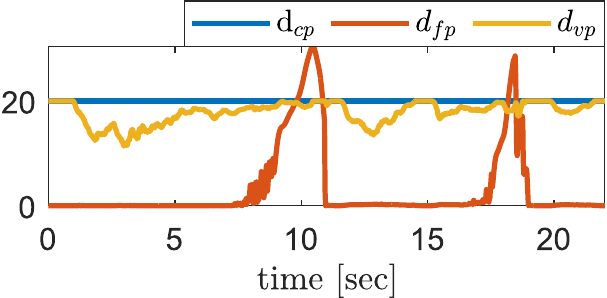}}
    \caption{Experimental results utilizing an RGB-D camera and YOLO detector for skeleton tracking (Experiment 1).}
\label{fig:cam_res}
\end{figure} 
In Figure \ref{fig:pathcam}, the trace of the robot end-effector and a timestamp of a human non-ergonomic posture are shown. The color of the end-effector's trace varies based on the output of the function $f(a,\mathbf p_e)$, providing immediate visual feedback about the function's value at different points. The image represents a posture where the user is at risk due to the elbow angle exceeding the upper threshold.

Figure \ref{fig:cam_res} (a)  shows a series of plots depicting, from top to bottom,  the most relevant human joint angles for this task, the total ergonomics factor, the norm of the position deviation between the robot end-effector and the god-object position, and the norm of human-exerted force throughout the task. The angles $\theta_a$ and $\theta_f$ are not visualized as they remain almost constant to zero throughout the task. As we can observe at the beginning of the task, the user starts with an ergonomic posture, i.e., $ a \approx 1 $. While the user tried to reach the line, her elbow angle $\theta_e$ increased to the risk value $\overline{\uptheta}_{eu}$, dropping the ergonomics factor to zero. During this phase (time window 8 - 10.8 sec), notable deviations between the robot end-effector and the god-object occur, which leads to increased resistance felt by the user. This resistance is evidenced by the corresponding increase in user-exerted force. It is important to note that while the user could perceive the resistance with less exerted force, in this proof-of-concept experiment, the subject was instructed to push the boundary intentionally. This was done to demonstrate the kinesthetic feedback mechanism inhibiting task continuation and to simulate a worst-case scenario. Under normal conditions, users should reconfigure their posture as soon as they perceive the PVFs. 
The user then adjusts her posture to a more ergonomic one, enabling her to reach the line with improved ergonomics. While gluing the line, her shoulder rotation angle $\theta_r$ exceeds its risk limit $\overline{\uptheta }_{ur}$ (time window 17.5 - 18.8 sec), resistance increases again, prompting another posture correction. After this adjustment, the user continues the task more ergonomically. 

In Figure \ref{fig:cam_res} (b), the damping terms related to the translational motion are only illustrated since the terms associated with rotational motion remain nearly constant, as the task predominantly involves translational motion. As observed, the damping term $d_{fp}$ associated with the ergonomics factor $a$ and with the deviation between the robot end-effector and the god-object increases as the deviation increases. For the other variable damping term $d_{vp}$, which varies based on the power transmitted from the user to the robot, it is observed that $d_{vp}$ takes its maximum value when $a=0$. This is because the motion in that case is almost zero, and therefore, the transmitted power is also minimal. The lack of motion can be explained by the fact that kinesthetic feedback is provided to the user, warning them of a non-ergonomic posture. Consequently, the user instinctively stops the end-effector's motion and adjusts his/her posture to adopt a more ergonomic one.

\begin{figure}[!t]
     \centering \includegraphics[width=0.48\textwidth]{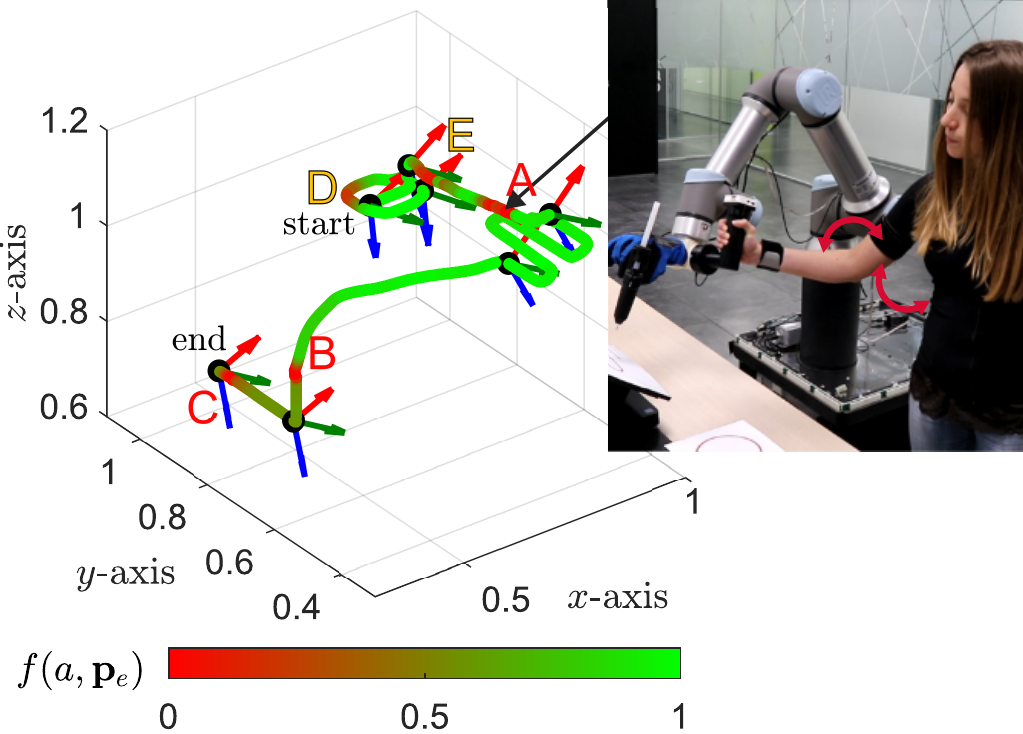}
     \vspace{-0.1cm}
    \caption{Visualization of the robot end-effector trace with color gradients representing $f(a,\mathbf p_e)$ values, alongside a time- stamped indication of a non-ergonomic period during the gluing task of the \textit{Prolonged Manipulation Experiment} (Experiment 2). Risk areas are labeled with A, B, and C, while D and E denote areas where at least one joint angle nears its risk limit without exceeding it.}
\label{fig:pathXSENS1}
\end{figure}

In the presented experiment, we demonstrated the functionality of the proposed control scheme  with a single expert user, showing that the system performs well using  a single RGB-D camera for skeleton tracking, provided it is not occluded and lighting conditions are adequate. However, this approach has inherent limitations, such as a restricted field of view and potential inaccuracies due to occlusions. To address these challenges, alternative approaches—such as employing multiple cameras or other monitoring devices—can be considered. In the subsequent experiments, the Xsens motion capture suit was utilized to overcome these limitations.

\begin{figure}[!t]
     \centering    
   { 
    \centering \includegraphics[width=0.45\textwidth]{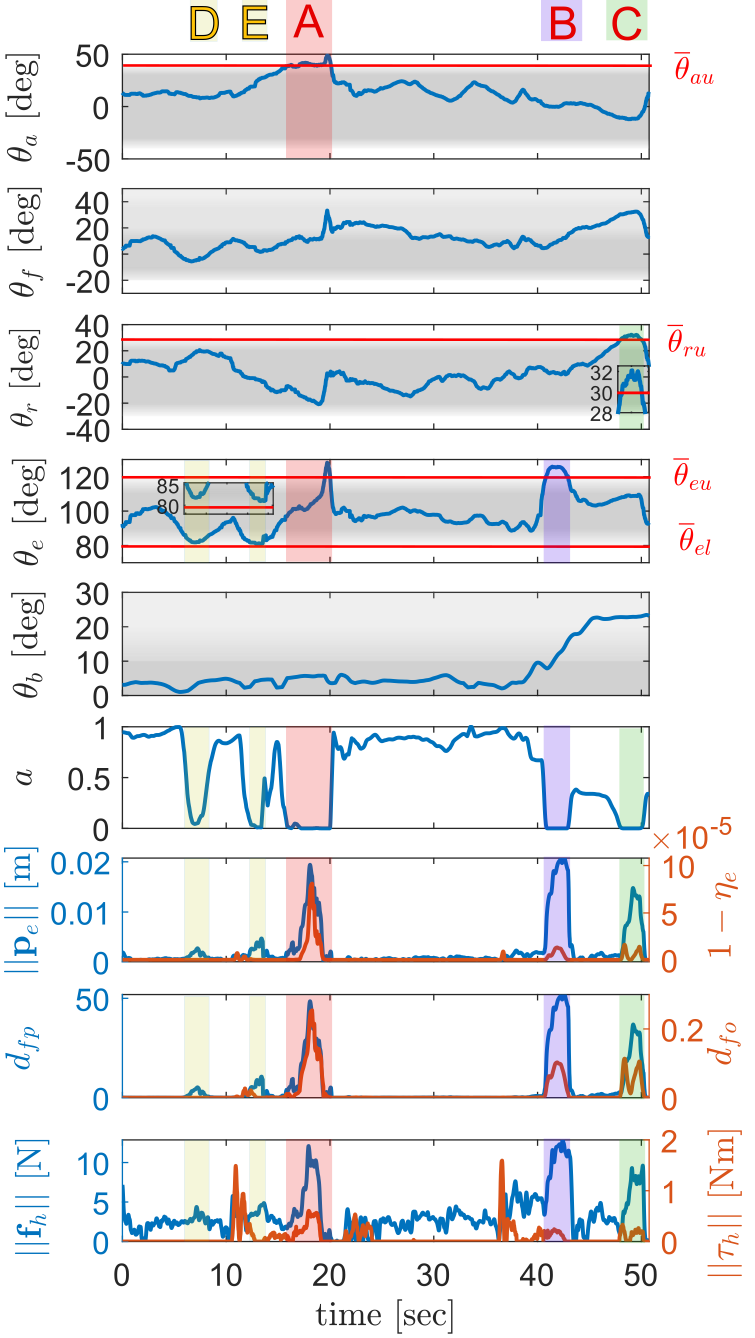}}
    \vspace{-0.3cm}
    \caption{From top to bottom, plots display human joint angles ($\theta_a, \theta_f, \theta_r, \theta_e, \theta_b$) with their corresponding ergonomic ranges, the ergonomics factor $a$, the distance metrics related to the robot pose from the god-object pose (translational $|| \mathbf p_e ||$ and rotational $1-\eta_e$), the ergonomics-based variable damping terms in the translational ($d_{fp}$) and rotational   ($d_{fo}$) motion, and the norms of human exerted force ($||\mathbf f_h||$) and torque ($||\tau_h||$) for the \textit{Prolonged Manipulation Experiment}, using IMU-based motion tracking system (Experiment 2).  The risk areas are labeled with A, B, and C, while D and E denote regions where at least one joint angle is near its risk limit without exceeding it.} 
\label{fig:xsens1_resa}
\end{figure}

\begin{figure}[!t]
     \centering    
    { \includegraphics[width=0.45\textwidth]{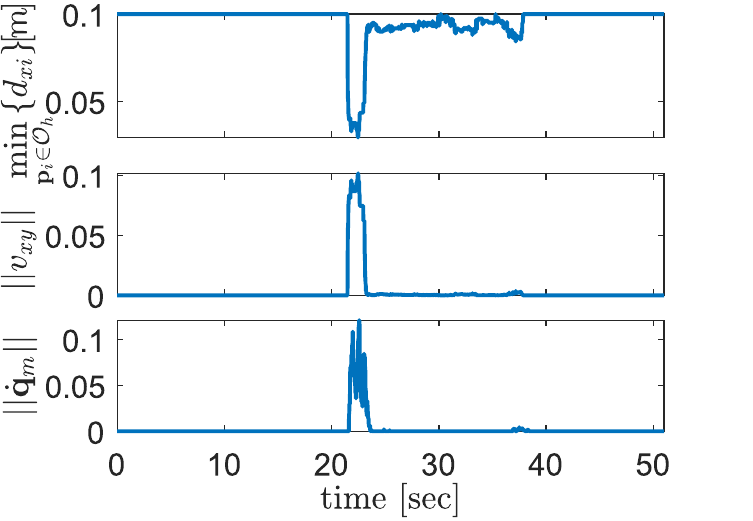}}
     \vspace{-0.3cm}
    \caption{Null-space adaptation data  for the \textit{Prolonged Manipulation} task, utilizing IMU-based motion tracking system (Experiment 2).}
\label{fig:xsens1_resb}
\end{figure}

\begin{figure*}[!t]
    \centering
    \subfloat[Human physical factors and ergonomics metrics]{ 
    \label{fig:exp1_statistical_analysis1}
        \includegraphics[width=0.48\linewidth]{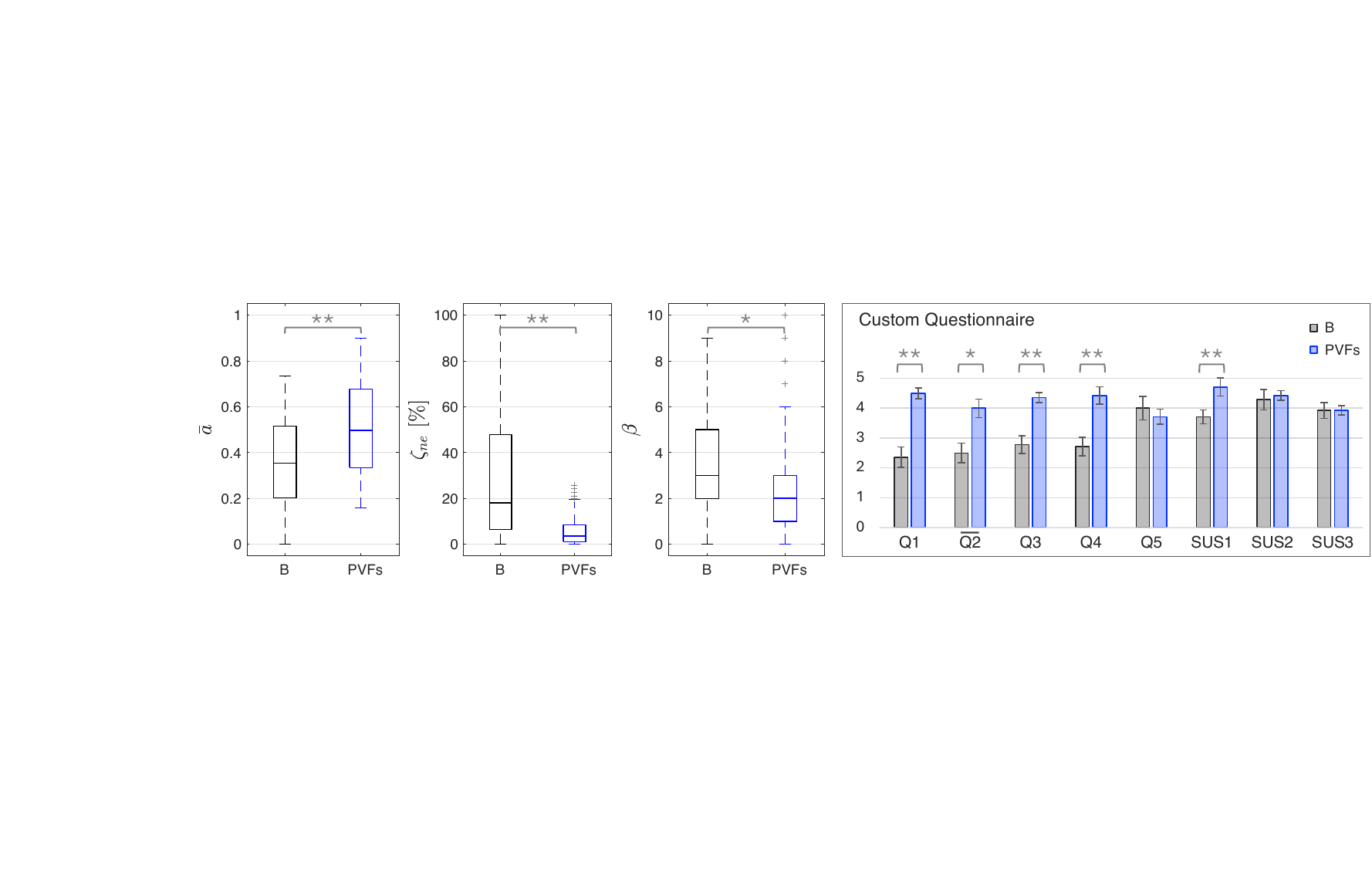}
    }
    \subfloat[User experience assessed through custom questionnaire]{ 
    \label{fig:exp1_statistical_analysis2}
        \includegraphics[width=0.4\linewidth]{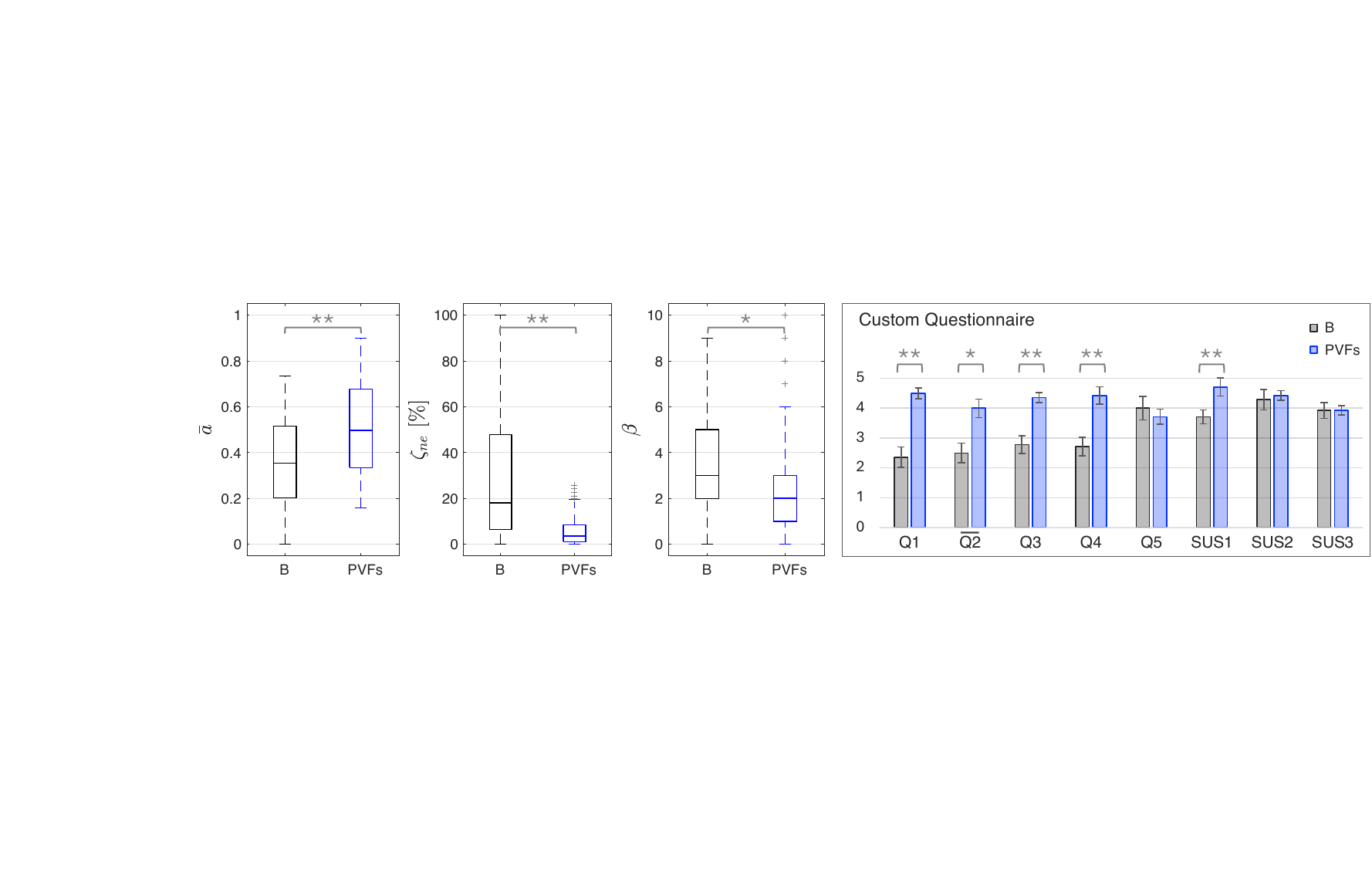}
    }
    \caption{
    Performance evaluation in the \textit{Prolonged Manipulation Task} (Experiment 2), comparing the proposed Postural Virtual Fixtures (PVFs) with the baseline condition (B) across all participants. Statistical significance levels are indicated at *$p<0.05$ and **$p<0.01$.
    }  \label{fig:exp1_statistical_analysis}
\end{figure*}
\subsection{Experiment 2: Prolonged Manipulation Task - IMUs Skeleton Tracking}
The objective of this experiment is to evaluate the proposed control framework in the prolonged manipulation task. An expert user first performs the task to demonstrate method functionality, and then the analysis is extended to multiple participants, each performing the task five consecutive times.
\subsubsection{Demonstration of Method Functionality:}  Figures \ref{fig:pathXSENS1}-\ref{fig:xsens1_resb} present the results of tracking the complete set of target paths in the \textit{Prolonged Manipulation Task}. As in the previous case, Figure \ref{fig:pathXSENS1} shows the trace of the robot end-effector, color-mapped according to the $f(a,\mathbf p_e)$ function, along with the end-effector frame indicating the orientation of the tool at the beginning and end of the task, as well as at the beginning and end of each tracking-target paths. 
Additionally, an image is shown where the human operator adopts a non-ergonomic posture due to pronounced shoulder abduction and elbow extension.

Figure \ref{fig:xsens1_resa} presents the angles of the human joints, the ergonomics factor, the pose deviation between the god-object and the robot end-effector, the ergonomics-based variable damping terms, and the human-exerted generalized force. As also observed in the previous experiment, when the angles enter the risk area, labeled A, B, and C, the pose deviation between the end effector and the god-object increases, causing the human operator to experience resistance, which serves as kinesthetic feedback for corrective human posture. This increase in pose deviation leads to a rise in the damping term \( d_{fp} \) as the translation deviation \( || \mathbf{p}_e || \) grows, and similarly, the damping term \( d_{fo} \) increases as the rotational deviation \( (1 - n_e) \) grows. When the angles are near the risk angle limits but have not yet exceeded them (labeled D and E), we can observe that the deviation between the end effector and the god-object increases slightly. However, this does not constrain their movement, as the angles have not yet exceeded the risk threshold, but the resistance is still noticeable. Notice that the highlighted areas A, B, C, D, and E are also connected with specific regions in the path plot shown in Figures \ref{fig:pathXSENS1}. The peaks in the torque around 11 seconds and 37 seconds are due to the applied torque to change the orientation of the tool, as can also be observed in the plot of the end effector frame in Figures \ref{fig:pathXSENS1}. 
In Figure \ref{fig:xsens1_resb}, the functionality of the null-space adaptation can be  observed when the user is in close proximity. In the top plot, we can see the minimum distance from the human point cloud to the platform, followed by the repulsive vector and the norm of the velocity platform. We can observe that as the user gets closer, the higher the repulsive vector, resulting in a higher velocity of the robotic platform, increasing the distance between the human and the platform. This demonstrates that the platform maintains a safe distance from the human's legs, providing her with enough space to complete the task.

\subsubsection{Multi-subject Validation:}

Figure \ref{fig:exp1_statistical_analysis}(a) presents the ergonomics metrics and corresponding statistical analysis for the two experimental conditions. A significant increase in the mean ergonomics factor $\bar{a}$ was observed when postural kinesthetic feedback was provided to the users, indicating that participants adopted and maintained a more ergonomic posture during the task with the proposed control framework. 
Additionally, the comparison with the baseline condition revealed a significant reduction in both the time spent in non-ergonomic configurations ($\zeta_{ne}$) and the frequency of crossing into the non-ergonomic posture region ($\beta$).
This demonstrates that participants effectively perceived the kinesthetic feedback, allowing them to quickly move away from non-ergonomic postures to complete the task.
Interestingly, the reduction in $\beta$ suggests that PVFs not only helps avoid non-ergonomic postures at the moment but also encourages users to avoid such postures and non-ergonomic boundaries in the future, thereby promoting ergonomics awareness.

\begin{figure}[!tb]
    \centering
    \includegraphics[width=0.77\linewidth]{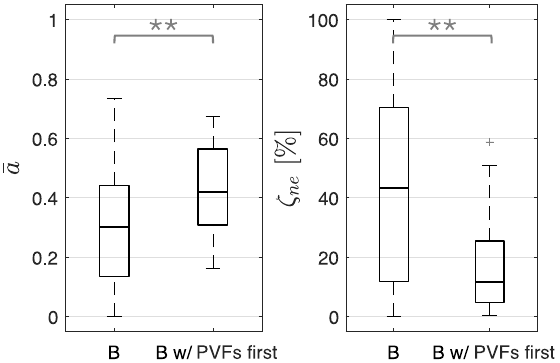}
    \caption{Effect of experimental condition order on ergonomic metrics in the \textit{Prolonged Manipulation Task} (Experiment~2), highlighting the learning to adopt a more ergonomic posture in participants who tested the Postural Virtual Fixtures (PVFs) condition first, even during the subsequent baseline condition (B) without kinesthetic feedback.}
    \label{fig:exp1_learning}
\end{figure}

\begin{figure*}[!t]
     \centering
     \includegraphics[width=\linewidth]{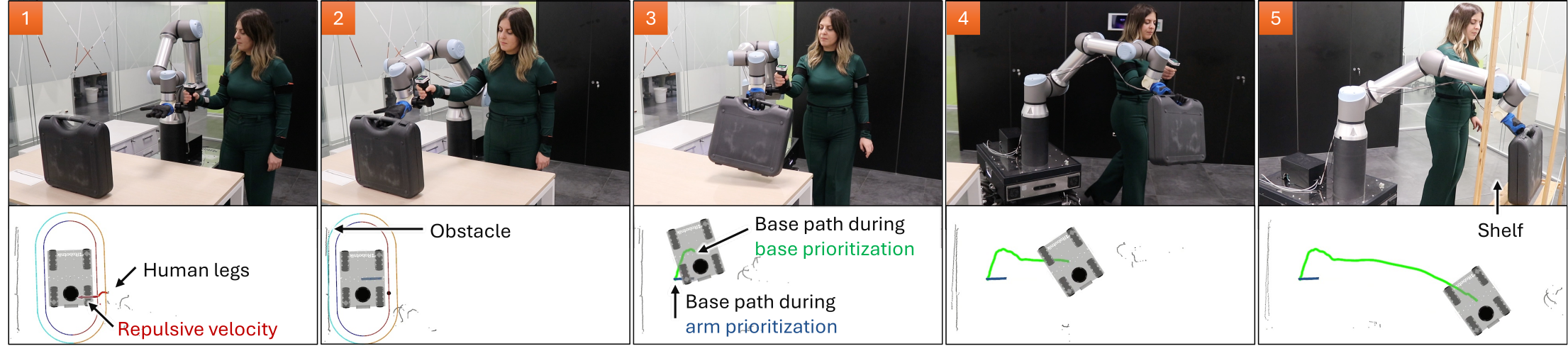}
    \caption{Excerpts of the main steps of the \textit{Long Distance Loco-Manipulation Task} (Experiment 3) with corresponding RViz visualizations: (1) As the user's legs approach the robotic base, a repulsive velocity is generated, prompting the base to move along the \textit{arm prioritization} path (blue) to avoid interfering with human movements; (2) the human-robot system grasps the box, with the robotic base halting to prevent collisions with obstacles; (3-4) the human-robot system transports the box to the drop-off location, switching to \textit{base prioritization} (green path) only when the user's posture is ergonomic; (5) the system places the box on the shelf.}
\label{fig:loco}
\end{figure*}
Participants' qualitative feedback, collected through the custom questionnaire for both experimental conditions, is shown in Figure \ref{fig:exp1_statistical_analysis}(b). The bars represent the mean values assigned by participants to the items, while the error bars indicate the $95\%$ confidence intervals of these means. For clarity, all results are reported such that higher values correspond to better outcomes. For items where the original scale indicated the opposite, a bar above the question number denotes that the scale has been inverted. 
The statistical analysis revealed that the PVFs significantly enhanced participants' perception of adopting and maintaining a more ergonomic posture during the task (\textit{Q1}), recognizing when they were approaching a non-ergonomic posture (\textit{Q2}), and the ease of reconfiguring their posture to complete tasks ergonomically (\textit{Q3}). 
Importantly, the kinesthetic feedback did not compromise participants' sense of control or confidence in using the system, as evidenced by no significant differences in \textit{Q5} ($p=0.44$) or \textit{SUS3} ($p=1.00$). Participants also rated the SRB as comparably easy to use (\textit{SUS2}, $p=0.77$) across both conditions, while they found the PVFs to be particularly well-integrated with the SRB's various functions (\textit{SUS1}, $p<0.01$).

Interestingly, the significant increase in values for \textit{Q4} across the two conditions suggests that participants felt the PVFs supported their learning to adopt and maintain ergonomic postures over repetitions. This perception was also confirmed by quantitative data. Regardless of the order in which participants performed the conditions, the PVFs significantly improved ergonomic metrics compared to the baseline. Howeover, participants who experienced the PVFs condition first demonstrated a significant improvement in the mean ergonomics factor (see the left plot of Figure \ref{fig:exp1_learning}) and reduction of time spent in non-ergonomic configurations (see the right plot of  Figure \ref{fig:exp1_learning}) during the subsequent baseline condition (without kinesthetic feedback), compared to those who started with the baseline condition.

NASA-TLX results showed a significant reduction in perceived physical demand with the PVFs feedback ($p<0.05$). Notably, the PVFs did not affect other scales, such as perceived mental demand, temporal demand, performance, effort, or frustration.
This finding is meaningful, as it highlights the effectiveness of the proposed framework in improving physical ergonomics without introducing additional cognitive burden, thereby maintaining overall task performance and user experience.

\subsection{Experiment 3: Long Distance Loco-Manipulation Task - IMUs Skeleton Tracking}
The objective of this experiment is to evaluate the proposed control framework in the long distance loco-manipulation task. First, the functionality is analyzed for an expert user, and then the analysis is extended to multiple participants.
\subsubsection{Demonstration of Method Functionality:}
Figures \ref{fig:loco}-\ref{fig:xsens2_res} show the results of executing the \textit{Long Distance Loco-Manipulation Task}. Figure \ref{fig:loco} captures key moments of the task with corresponding RViz visualizations, illustrating the designed capsule, the point cloud of the human legs, and the point cloud of the obstacle. The path of the platform's center is depicted in blue during \textit{arm prioritization} and in green during \textit{base prioritization}. The blue path highlights the platform's motion when the human is in close proximity to the base during manipulation facilitating the adoption of an ergonomic posture by the user.

\begin{figure}[!t]
    \centering 
    \includegraphics[width=0.45\textwidth]{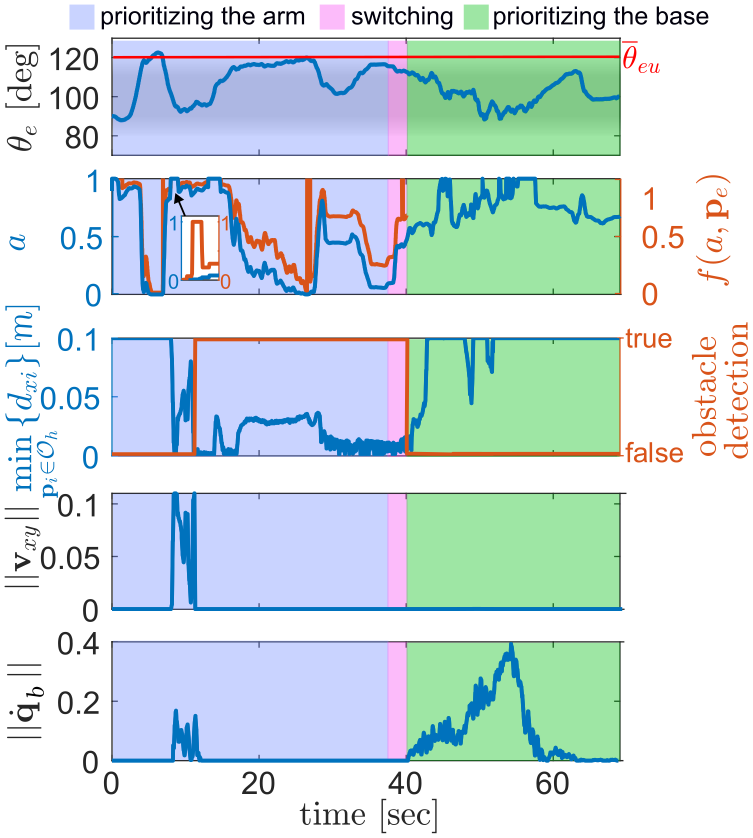}
    \vspace{-0.2cm}
    \caption{Results of the proposed controller for the \textit{Long Distance Loco-Manipulation Task} (Experiment 3).}
\label{fig:xsens2_res}
\end{figure}

\begin{figure}[!t]
    \centering
\includegraphics[width=0.8\linewidth]{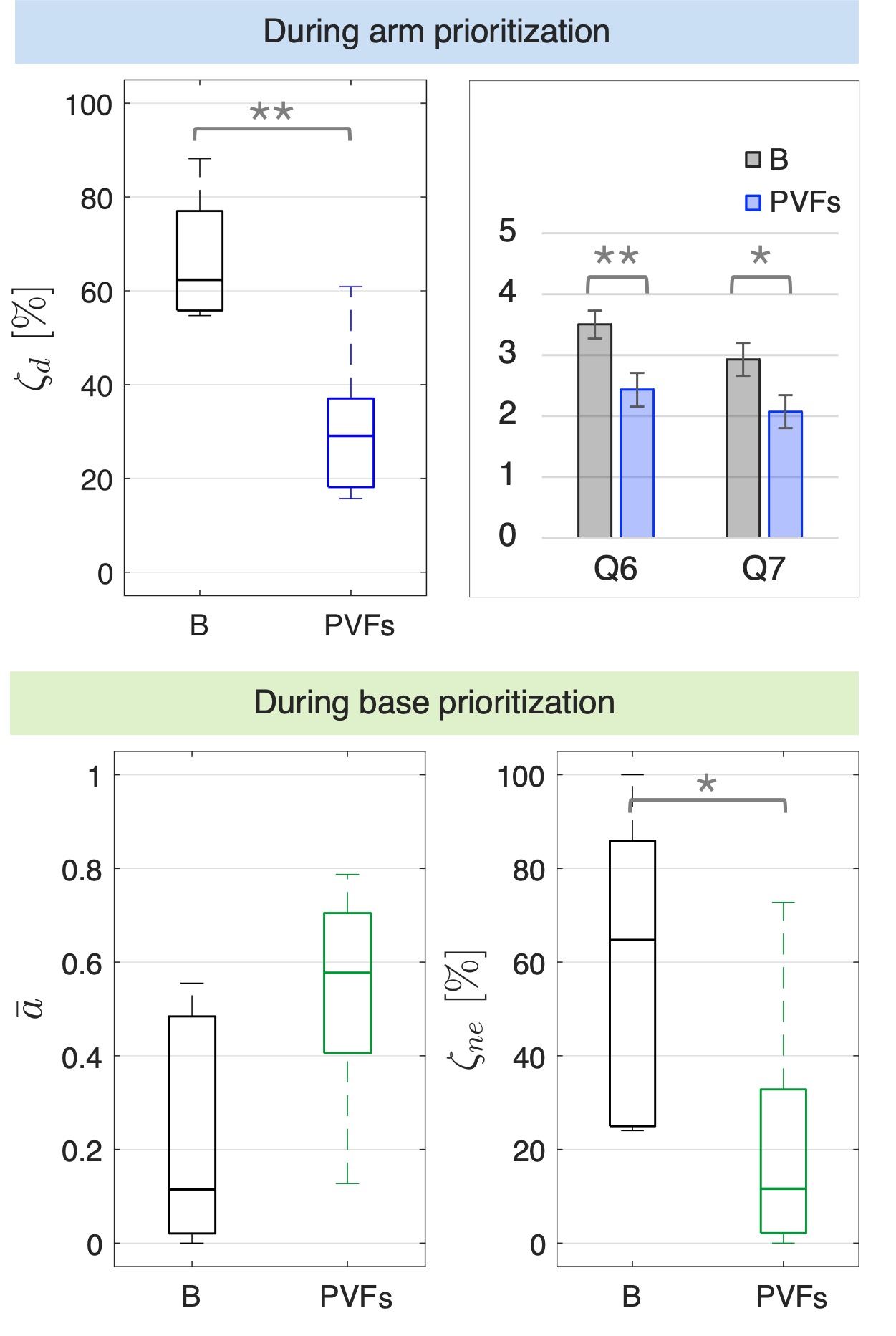}
    \caption{Performance evaluation in the \textit{Long Distance Loco-Manipulation Task} (Experiment 3). The top plots illustrate the quantitative and qualitative results demonstrating reduced robotic base interference with human movements during \textit{arm prioritization}. The bottom plots highlight the effect of improved ergonomics during \textit{base prioritization} as a result of the conditional switch that allows \textit{base prioritization} only when an ergonomic posture is assumed  during \textit{arm prioritization}.}
    \label{fig:exp2_statistical_analysis}
\end{figure}

In Figure \ref{fig:xsens2_res}, the phase prioritizing the robotic arm is highlighted with blue shading. Pink shaded regions show the attempt to switch priority to the base, which was temporarily delayed to allow the user to adopt a more ergonomic posture before transitioning to base prioritization for long-distance loco-manipulation. Green shaded areas indicate periods of active base prioritization. 
At the top, the elbow angle is shown as the only angle causing the ergonomics factor to drop to zero when exceeding its risk value around $t=5 \si{s}$.
In the subsequent plot, the ergonomics factor $a$ and the function  $\mathbf f(a,\mathbf p_e)$ are illustrated. With $m=0.5$ set in \eqref{eq:f}, the results yield  $f\geq a$, as observed also in the figure. Around $t=7 \si{s}$, we observe that the $\mathbf f(a,\mathbf p_e)=1$ while $a$ is almost zero. This occurred because, at that time, the user opted to utilize the framework's functionality to replan the movement by bringing her hand back before continuing the task, instead of reconfiguring her posture while maintaining the same hand pose. Specifically, the value of $\mathbf f(a,\mathbf p_e)$ enabled the god-object—and, consequently, the user's hand—to move in the direction of returning to the last ergonomic posture. 
The last three plots in the figure illustrate the functionality of the null-space adaptation. When priority is given to the manipulator and points are detected in the obstacle-related region, the adaptation of the robot's null space to support ergonomic posture adoption is disabled to prevent dangerous collisions with the environment, even if the user moves close to the platform. Consequently, the repulsive vector remains zero, and the base velocity is also zero. Notably, once the ergonomics factor $a$ reaches 0.5, the system automatically switches to base prioritization. What stands out in the plot is that $a$ maintains high values, never dropping to zero, even if kinesthetic feedback is not provided during this phase. In the present experiment, the obstacles in the environment are static; however, the proposed framework also supports time-variable obstacles, as shown in the attached video.

\subsubsection{Multi-subject Validation:}
Across all subjects, as was expected, the proposed control framework, which accounts for the proximity of the user to the robotic base, led to a significant reduction in the percentage of time the robotic base remained within the critical proximity threshold compared to the baseline, as shown in the top-left plot of Figure \ref{fig:exp2_statistical_analysis}.   This is also reflected in the custom questionnaire, where participants reported less interference (\textit{Q6}, top-right plot) and fewer concerns about collisions with surrounding obstacles (\textit{Q7},  top-left plot). Furthermore, this base adaptation, combined with PVF feedback, resulted in significant improvement in ergonomic metrics compared to the baseline ($p<0.01$), indicating that the proposed control framework  enabled participants to adopt ergonomic postures without interference from the robotic base during precise manipulation.  

Notably, the carryover effect induced by conditional switching, which allowed \textit{base prioritization} only when the user assumed  an ergonomic posture,  confirmed our hypothesis that promoting ergonomics during \textit{arm prioritization} would positively influence posture during  \textit{base prioritization}. Specifically, the mean ergonomics factor $\bar{a}$ during \textit{base prioritization} increased by $136\%$ (marginal significant, $p=0.08$), accompanied by a significant reduction in the time spent in non-ergonomic postures, even in the absence of kinesthetic feedback during \textit{base prioritization} (see the bottom plots of Figure \ref{fig:exp2_statistical_analysis}). 
The results of other questionnaire items (\textit{Q1}-\textit{Q5} and NASA-TLX measures across both task phases, i.e., \textit{arm prioritization} and \textit{base prioritization}) aligned with findings from the previous experiment. These outcomes confirm that the proposed framework enhances both ergonomics and user experience in human-robot collaborative loco-manipulation tasks.

\subsection{Discussion}
The experimental results highlight the functionality and effectiveness of the proposed control scheme in providing kinesthetic feedback to enhance posture awareness and promote ergonomic practices among SRB users. Users were observed to adopt more ergonomic postures when provided with kinesthetic feedback, spending less time in ergonomically risky postures and engaging in such postures less frequently. This suggests that the feedback immediately positively impacted posture, reducing the time spent in potentially harmful configurations.
Furthermore, the comparison of ergonomics metrics across the two condition orders (with and without feedback) provides initial evidence that the feedback may have a learning effect, as users exhibited improved ergonomics even after the feedback was removed, compared to those who had not previously experienced the proposed control framework. This indicates that the feedback not only promoted short-term improvements but also helped users establish better posture habits that persisted beyond the active intervention.

In addition to improving posture, the results revealed enhanced human-robot coordination, with users spending less time in close proximity to the robotic platform, suggesting that the platform does not hinder human movement. A particularly notable finding is that enabling the robotic base prioritization in the SRB controller only when users achieved an ergonomic posture above a defined threshold promoted better posture maintenance throughout the task, reducing the likelihood of users adopting risky postures during locomotion. This suggests that starting with a good posture can have a lasting effect on maintaining better ergonomics during locomotion tasks involving long transportation, as users tend to maintain the same arm posture throughout these tasks.
Quality results using questionnaires further support the quantitative findings.

Importantly, the proposed control framework is versatile, as it is not limited to specific robotic devices. It can be applied to any system that accepts position or velocity commands. 
Furthermore, testing with both vision-based and IMU-based setups shows that a specific motion-tracking approach is not required, as long as the critical joint positions of the user are monitored.     
This flexibility suggests that the framework has the potential for widespread application in pHRI tasks, both within SRB setups and in broader contexts.

\section{CONCLUSIONS} 

This study introduced a novel control framework for SRBs designed to promote ergonomic posture through kinesthetic feedback, utilizing virtual fixture techniques integrated with a continuous and online ergonomic assessment framework. Additionally, the floating base of the SRB could dynamically adjust based on the operator's proximity, ensuring that the platform avoids collisions with obstacles in the environment and further enhances coordination between the robot and the operator. To evaluate the effectiveness of the proposed framework, a user study involving fourteen participants was conducted. The study included two distinct tasks: the first task focused on fine motion, prioritizing the robotic arm of the SRB, while the second task combined fine and larger motions, requiring prioritization of both the arm and the base. Statistical analysis of both quantitative and qualitative metrics demonstrated that the proposed SRB solution effectively helps users maintain a more ergonomic posture, with additional evidence of a learning effect, suggesting that the benefits of the framework extend beyond immediate usage. A key limitation of the current framework is that the online ergonomic assessment focuses exclusively on the upper body, even though, in the  pHRI tasks, the lower limbs (i.e., hips, thighs, knees, ankles, and feet) are also potentially susceptible to strain and discomfort. Additionally, IMU sensor drift may impact motion tracking, and the 2D LiDAR representation limits the effectiveness of collision avoidance. To address these issues, future research will incorporate lower body joint angles, use sensor fusion to reduce drift, and explore 3D LiDAR or sensor combinations for better environmental mapping. The current framework focuses on ergonomic assessment based on human kinematics, but does not account for forces. Future work will also integrate the dynamics of human movement, such as forces, torques, and energy exchanges. Another planned improvement is shifting from an IMU-based motion-tracking suit to a multi-camera setup for human skeleton detection. This would provide more flexibility by eliminating the need for a wearable device and avoiding the drift associated with IMU-based tracking.

\section*{APPENDIX}

\subsection{Joint Motion Analysis}
{Given the  shoulder $\mathbf{S}_i$, elbow $\mathbf{E}_i$, wrist $\mathbf{W}_i$, neck $\mathbf{Ne}$,  middle of the thorax $\mathbf{Th}$, middle of the pelvis $\mathbf{Pl}$, and knee $\mathbf{Kn}$ keypoints, where $i=\mathrm{\{R,L\}}$ indicates the right or left side of the human body, respectively, we can first define the frontal/coronal plane and the sagittal/lateral plane, described by the normal vectors $\mathbf f_i$ and $\mathbf s_i$, respectively, as follows:
\begin{equation}
\mathbf f_i =\dfrac{ \overrightarrow{\mathbf{Ne}\mathbf{S}_i }\times \overrightarrow{\mathbf{NeTh}} }{\|\overrightarrow{\mathbf{Ne}\mathbf{S}_i }\times \overrightarrow{\mathbf{NeTh}}\|} \text{ and } \mathbf s_i=\dfrac{\overrightarrow{\mathbf{Ne S}_i}}{\|\overrightarrow{\mathbf{Ne S}_i}\|}.
\end{equation}
Note that the vectors are orthogonal, i.e., $\mathbf f_i^\mathrm{T}\mathbf s_i=0$.}

{The shoulder movement can be characterized by three angles:  
abduction/adduction  $ \theta^{i}_a$, flexion/extension $ \theta^{i}_{f}$, and internal/external rotation $ \theta^{i}_{r}$. The shoulder abduction/adduction angle $\theta^{i}_a$ is defined as the angle between the upper arm and trunk in the frontal plane, and it can be measured based on the angle between the vectors  $\overrightarrow{\mathbf{Ne S}_i} $ and $\overrightarrow{\mathbf{S}_i \mathbf{E}_i} $ projected in the coronal plane (see Figure \ref{fig:human_angle}(a)) as follows:
\begin{equation}
    \theta^{i}_a = \begin{cases}
        - \cos^{-\mathrm{1}}\left(-\dfrac{{{\overrightarrow{\mathbf{S}_i \mathbf{E}_i} }^{'\mathrm{T}}} {\mathbf{s}_i }}{\|{\overrightarrow{\mathbf{S}_i \mathbf{E}_i}^{'} }\| }\right) + \dfrac{3\pi}{2},&  \text{if } ( {r_a^{i}}  < 0\ \land  \\ & \hspace{-0.3cm}{\overrightarrow{\mathbf{S}_i \mathbf{E}_i} }^{'\mathrm{T}}\mathbf{s}_i>  0 )  \vspace{0.2cm}  \\
       {r_a^{i}}  \cos^{-\mathrm{1}}\left(-\dfrac{{{\overrightarrow{\mathbf{S}_i \mathbf{E}_i} }^{'\mathrm{T}}} {\mathbf{s}_i }}{\|{\overrightarrow{\mathbf{S}_i \mathbf{E}_i}^{'} }\| }\right)  - \dfrac{\pi}{2}, & \text{otherwise}
    \end{cases}
\end{equation}
where ${r_a^{i}} = \mathrm{sign}\big(\mathbf{f}_i^\mathrm{T}(\mathbf s_i \times{\overrightarrow{\mathbf{S}_i \mathbf{E}_i} }^{'}  )\big)   $ and
${\overrightarrow{\mathbf{S}_i \mathbf{E}_i} }^{'} =(\mathbf I-\mathbf f_i \mathbf f_i^\mathrm{T}){\overrightarrow{\mathbf{S}_i \mathbf{E}_i} } $}.

{The shoulder flexion/extension angle $\theta^{i}_{f}$ is measured within the sagittal plane.  As presented in Figure \ref{fig:human_angle}(b), it can be calculated using the triangles shown in the figure by determining the cosine of the angles between the normal unit vectors of each corresponding triangle as follows  \citep{miyashita2008relationship}:
\begin{equation}
    \theta^{i}_{f} =
       {r_f^{i}}  \cos^{-\mathrm{1}}({{{\mathbf{f}_i} }^\mathrm{T} \mathbf{u}_i  }) 
\end{equation}
where $ \mathbf{u}_i=\dfrac{\overrightarrow{ \mathbf{Ne}\mathbf{S}_i} \times \overrightarrow{\mathbf{NeE}_i} }{\| \overrightarrow{ \mathbf{Ne}\mathbf{S}_i} \times \overrightarrow{\mathbf{NeE}_i} \|} $,  ${r_f^{\mathrm R}} = \mathrm{sign}\big(\mathbf{s}_{\mathrm R}^\mathrm{T}(\mathbf{f}_{\mathrm R}\times \mathbf{u}_{\mathrm R})\big) \vspace{0.5cm}   $ and ${r_f^{\mathrm L}} = \mathrm{sign}\big(-\mathbf{s}_{\mathrm L}^\mathrm{T}(\mathbf{f}_{\mathrm L}\times \mathbf{u}_{\mathrm L})\big) $.}  

{The shoulder internal/ external angle $\theta^{i}_{r}$ is calculated  using the triangles shown in Figure \ref{fig:human_angle}(c) as follows:
\begin{equation}
    \theta^{i}_{r} = \begin{cases}
        -  \cos^{-\mathrm{1}}\left({\mathbf{u}_i }^\mathrm{T}\mathbf{v}_i\right) + \dfrac{3\pi}{2},&  \text{if } ({r_l^{i}}  < 0 \ \land  {\mathbf{u}_i }^\mathrm{T}\mathbf{v}_i<  0)  \vspace{0.2cm}  \\
       r_l^{i}\cos^{-\mathrm{1}}\left({\mathbf{u}_i }^\mathrm{T}\mathbf{v}_i\right)-\dfrac{\pi}{2}, & \text{otherwise}
    \end{cases}
\end{equation}where $ \mathbf{v}_i=\dfrac{\overrightarrow{\mathbf{E}_i\mathbf{S}_i}\times \overrightarrow{ \mathbf{E}_i\mathbf{W}_i} }{\|\overrightarrow{\mathbf{E}_i\mathbf{S}_i}\times \overrightarrow{ \mathbf{E}_i\mathbf{W}_i} \|} $,   ${r_l^{\mathrm R}} = \mathrm{sign}\big({\overrightarrow{\mathbf{E}_{\mathrm R}\mathbf{S}_{\mathrm R}}}^\mathrm{T}(\mathbf{u}_{\mathrm R}\times \mathbf{v}_{\mathrm R})\big)   $ and  ${r_l^{\mathrm L}} = \mathrm{sign}\big({\overrightarrow{\mathbf{E}_{\mathrm L}\mathbf{S}_{\mathrm L}}}^\mathrm{T}(\mathbf{v}_{\mathrm L}\times \mathbf{u}_{\mathrm L})\big)   $ .}

{The elbow flexion/extension angle $\theta^{i}_{e}$ can be calculated based on the angle between the vectors $\overrightarrow{ \mathbf{E}_i\mathbf{W}_i}$ and $\overrightarrow{\mathbf{E}_i\mathbf{S}_i}$ (see Figure \ref{fig:human_angle}(d)) as follows:
\begin{equation}
    \theta^{i}_{e} = \cos^{-\mathrm{1}}\bigg(\dfrac{\overrightarrow{ \mathbf{E}_i\mathbf{W}_i}^\mathrm{T}\overrightarrow{\mathbf{E}_i\mathbf{S}_i}}{\| \overrightarrow{ \mathbf{E}_i\mathbf{W}_i}\| 
\|\overrightarrow{\mathbf{E}_i\mathbf{S}_i}\|}\bigg) 
\end{equation}}

{The bending angle can be calculated based on the angle between the vectors ${\overrightarrow{ \mathbf{KnPl}}}$ and $ \overrightarrow{\mathbf{PlNe}} $ (see Figure \ref{fig:human_angle}(e)) as follows:
\begin{equation}
    \theta_{b} =r_b \cos^{-\mathrm{1}}\bigg(\dfrac{{\overrightarrow{ \mathbf{KnPl}}}^\mathrm{T}\overrightarrow{\mathbf{PlNe}}}{\| {\overrightarrow{ \mathbf{KnPl}}}\| 
\|\overrightarrow{\mathbf{PlNe}}\|}\bigg) 
\end{equation}
where  ${r_b} = \mathrm{sign}\big({\overrightarrow{\mathbf{Ne}\mathbf{S_R}}}^\mathrm{T}({\overrightarrow{ \mathbf{PlNe}}}\times {\overrightarrow{\mathbf{KnPl} }})\big)   $.
\begin{remark}
To ensure consistency in the results, the previously calculated angle is used when the vector norms involved in the divisions are zero.
\end{remark}}
\subsection{DLS for Whole-Body Inverse Kinematics}
Consider the Jacobian $\mathbf{J}_b\in \mathbb{R}^{6 \times 3}$, which maps the floating base joint velocities to the generalized velocities of the end-effector, and $\mathbf{J}_a \in \mathbb{R}^{6 \times n}$, the Jacobian associated with the manipulator's joints. The complete robot body Jacobian is then given by:
$\mathbf{J} = [\mathbf{J}_b \ \ \mathbf{J}_a]\in \mathbb{R}^{6\times(3+n)}$.

To solve the inverse kinematics problem, we employ the Damped Least-Squares (DLS) method, originally proposed in \cite{nakamura1986inverse} and later utilized by \cite{giammarino2024super} for SRBs. This approach introduces a damping factor to regularize the solution, ensuring smooth and stable joint movements while maintaining robustness in singular configurations. The solution to the damped least-squares problem is given by:
\begin{equation}
\begin{split}
    \dot{\mathbf{ q}}_p&=(\mathbf J^\mathrm{T} \mathbf W_x\mathbf J + \mathbf W_q)^{-1} \mathbf J^\mathrm{T} \mathbf W_x( \mathbf v-\mathbf K_x \mathbf x_e)\\&=\mathbf J^* (\mathbf v-\mathbf K_x \mathbf x_e) \in \mathbb{R}^{(3+n)} 
\end{split}
\end{equation}
where $\mathbf v \in \mathbb R ^6$  is the generalized desired velocity of the robot end effector, which is calculated by performing the numerical integration of the admittance model \eqref{eq:adm_x2y},  $\mathbf x_e  \in \mathbb R ^6$  represents the pose error between the admittance model pose and the actual end-effector pose,  $\mathbf K_x \in \mathbb{R}^{6 \times 6}$ is the pose-error feedback gain matrix that helps minimize deviations caused by the introduction of the joint velocity damping matrix $\mathbf W_q \in \mathbb{R}^{(3+n) \times (3+n)}$ and $\mathbf J^*\in \mathbb{R}^{(3+n)\times 6}$ is the so-called Singularity Robust Inverse. The weighting matrices $\mathbf W_x \in \mathbb{R}^{6 \times 6}$ and $\mathbf W_q $ are positive defined, thus the matrix $\mathbf J^\mathrm{T} \mathbf W_x\mathbf J + \mathbf W_q$ is positive definite and hence nonsingular, i.e., invertible. As in \cite{giammarino2024super}, the matrix $\mathbf{W}_x$ is designed to introduce priority into the task vector, while the weight matrix $\mathbf{W}_q$ can be adjusted to prioritize specific joints by assigning higher weight values to those with lower priority, increasing their damping relative to others, and it can also normalize the effects of different joint types by adjusting their weights accordingly. Notice that for redundancy resolution, the null-space projection matrix
$\mathcal{N}=(\mathbf I-\mathbf J^\mathrm{*} \mathbf J)$, is utilized to project a vector onto the null space of the Jacobian.
\subsection{Closest Point on a 2D Capsule} Let us consider a capsule in 2D space  defined by a line segment with length $\mathrm{L} \in \mathbb{R}_{\geq 0}$ and radius $\mathrm{r}_c \in \mathbb{R}_{\geq 0}$. The axis of the capsule is described by \cite{kastritsi2022passive}:
\begin{equation}
\mathbf{p}_l(\sigma) = \mathbf{p}_f + \mathbf{u}_{rf} \mathrm{L} \sigma, \quad \sigma \in [0, 1],   
\end{equation}
where $\mathbf{p}_f \in \mathbb{R}^2$ is the front point of the line and $\mathbf{u}_{rf} \in \mathbb{R}^2$  is the unit vector pointing in the direction of the capsule's length.

For each point  $\mathbf{p}_i \in \mathbb{R}^2$, 
closest point on the capsule's perimeter is computed by:
\begin{equation}
\mathbf{p}_i^c = \mathbf{p}_i^* + \frac{\mathbf{p}_i - \mathbf{p}_i^*}{\|\mathbf{p}_i - \mathbf{p}_i^*\|} \mathrm{r}_c
\end{equation}
where $\mathbf{p}_i^*=\mathbf{p}_i(\sigma^*)$ 
is the nearest point on the capsule's line with $\sigma^*$ given by :
\begin{equation}
\sigma_i^* = \begin{cases} 
   \zeta_i & \text{if } 0 \leq \zeta_i \leq 1, \\
   1 & \text{if } \zeta_i > 1, \\
   0 & \text{if } \zeta_i < 0,
   \end{cases} \quad \zeta_i = \frac{1}{\mathrm{L}} \mathbf{u}_{rf}^\mathrm{T} (\mathbf{p}_i - \mathbf{p}_f).
\end{equation}

\section*{Acknowledgments}
We would like to express our sincere appreciation to Dr. Francisco Jesus Ruiz for the design of the robot handler and his technical support in the 3D printing process.
\section*{Declaration of conflicting interests}
The author(s) declared no potential conflicts of interest with respect to the research, authorship, and/or publication of this article.
\section*{Funding}
This work was supported by the Italian Ministry of University and Research (MUR) under the Fondo Italiano per la Scienza (FIS), call FIS 3, project EPIC (code FIS-2024-02654), and the Italian National Institute for Insurance against Accidents at Work (INAIL) ergoCub Core Project. 
\section{ORCID iDs}
Theodora Kastritsi 
\href{https://orcid.org/0000-0002-5379-0259}{https://orcid.org/0000-0002-5379-0259} \\[2mm]
Marta Lagomarsino \href{https://orcid.org/0000-0001-9121-1812}{https://orcid.org/0000-0001-9121-1812} \\[2mm] 
Arash Ajoudani \href{https://orcid.org/0000-0002-1261-737X}{https://orcid.org/0000-0002-1261-737X} 

\bibliographystyle{sageh}
\bibliography{ref}

@article{hignett2000rapid,
  title={Rapid entire body assessment (REBA)},
  author={Hignett, Sue and McAtamney, Lynn},
  journal={Applied ergonomics},
  volume={31},
  number={2},
  pages={201--205},
  year={2000},
  publisher={Elsevier}
}

@book{Khalil_book,
	author        = "H. Khalil",
	title         = "Nonlinear Systems",
	publisher     = "Prentice Hall",
	edition       = "Third",
	year          = "2002"
}

@article{kermavnar2021effects,
  title={Effects of industrial back-support exoskeletons on body loading and user experience: An updated systematic review},
  author={Kermavnar, Tja{\v{s}}a and de Vries, Aijse W and de Looze, Michiel P and O’Sullivan, Leonard W},
  journal={Ergonomics},
  volume={64},
  number={6},
  pages={685--711},
  year={2021},
  publisher={Taylor \& Francis}
}

@inproceedings{koutras2021exponential,
  title={Exponential stability of trajectory tracking control in the orientation space utilizing unit quaternions},
  author={Koutras, Leonidas and Doulgeri, Zoe},
  booktitle={2021 IEEE/RSJ International Conference on Intelligent Robots and Systems (IROS)},
  pages={8151--8158},
  year={2021},
  organization={IEEE}
}

@article{bar2021influence,
  title={The influence of using exoskeletons during occupational tasks on acute physical stress and strain compared to no exoskeleton--A systematic review and meta-analysis},
  author={B{\"a}r, Mona and Steinhilber, Benjamin and Rieger, Monika A and Luger, Tessy},
  journal={Applied Ergonomics},
  volume={94},
  pages={103385},
  year={2021},
  publisher={Elsevier}
}

@article{sirintuna2024enhancing,
  title={Enhancing human--robot collaborative transportation through obstacle-aware vibrotactile warning and virtual fixtures},
  author={Sirintuna, Doganay and Kastritsi, Theodora and Ozdamar, Idil and Gandarias, Juan M and Ajoudani, Arash},
  journal={Robotics and Autonomous Systems},
  volume={178},
  pages={104725},
  year={2024},
  publisher={Elsevier}
}

@inproceedings{kim2020moca,
  title={MOCA-MAN: A MObile and reconfigurable Collaborative Robot Assistant for conjoined huMAN-robot actions},
  author={Kim, Wansoo and Balatti, Pietro and Lamon, Edoardo and Ajoudani, Arash},
  booktitle={2020 IEEE international conference on robotics and automation (ICRA)},
  pages={10191--10197},
  year={2020},
  organization={IEEE}
}

@inproceedings{sidiropoulos2021variable,
  title={A variable admittance controller for human-robot manipulation of large inertia objects},
  author={Sidiropoulos, Antonis and Kastritsi, Theodora and Papageorgiou, Dimitrios and Doulgeri, Zoe},
  booktitle={2021 30th IEEE International Conference on Robot \& Human Interactive Communication (RO-MAN)},
  pages={509--514},
  year={2021},
  organization={IEEE}
}

@article{miyashita2008relationship,
  title={Relationship between maximum shoulder external rotation angle during throwing and physical variables},
  author={Miyashita, Koji and Urabe, Yukio and Kobayashi, Hirokazu and Yokoe, Kiyoshi and Koshida, Sentaro and Kawamura, Morio and Ida, Kunio},
  journal={Journal of Sports Science \& Medicine},
  volume={7},
  number={1},
  pages={47},
  year={2008},
  publisher={Dept. of Sports Medicine, Medical Faculty of Uludag University}
}

@inproceedings{lecours2012variable,
  title={Variable admittance control of a four-degree-of-freedom intelligent assist device},
  author={Lecours, Alexandre and Mayer-St-Onge, Boris and Gosselin, Cl{\'e}ment},
  booktitle={2012 IEEE international conference on robotics and automation},
  pages={3903--3908},
  year={2012},
  organization={IEEE}
}

@inproceedings{lamy2009achieving,
  title={Achieving efficient and stable comanipulation through adaptation to changes in human arm impedance},
  author={Lamy, Xavier and Colledani, Fr{\'e}d{\'e}ric and Geffard, Franck and Measson, Yvan and Morel, Guillaume},
  booktitle={2009 IEEE international conference on Robotics and automation},
  pages={265--271},
  year={2009},
  organization={IEEE}
}

@article{rosenberg1992use,
  title={The use of virtual fixtures as perceptual overlays to enhance operator performance in remote environments},
  author={Rosenberg, Louis B},
  journal={Air force material command},
  pages={1--42},
  year={1992}
}

@article{keemink2018admittance,
  title={Admittance control for physical human--robot interaction},
  author={Keemink, Arvid QL and Van der Kooij, Herman and Stienen, Arno HA},
  journal={The International Journal of Robotics Research},
  volume={37},
  number={11},
  pages={1421--1444},
  year={2018},
  publisher={SAGE Publications Sage UK: London, England}
}

@article{yang2021supernumerary,
  title={Supernumerary robotic limbs: A review and future outlook},
  author={Yang, Bo and Huang, Jian and Chen, Xinxing and Xiong, Caihua and Hasegawa, Yasuhisa},
  journal={IEEE Transactions on Medical Robotics and Bionics},
  volume={3},
  number={3},
  pages={623--639},
  year={2021},
  publisher={IEEE}
}

@article{tong2021review,
  title={Review of research and development of supernumerary robotic limbs},
  author={Tong, Yuchuang and Liu, Jinguo},
  journal={IEEE/CAA Journal of Automatica Sinica},
  volume={8},
  number={5},
  pages={929--952},
  year={2021},
  publisher={IEEE/CAA Journal of Automatica Sinica}
}

@article{de2016exoskeletons,
  title={Exoskeletons for industrial application and their potential effects on physical work load},
  author={De Looze, Michiel P and Bosch, Tim and Krause, Frank and Stadler, Konrad S and O’sullivan, Leonard W},
  journal={Ergonomics},
  volume={59},
  number={5},
  pages={671--681},
  year={2016},
  publisher={Taylor \& Francis}
}

@article{theurel2019occupational,
  title={Occupational exoskeletons: overview of their benefits and limitations in preventing work-related musculoskeletal disorders},
  author={Theurel, Jean and Desbrosses, Kevin},
  journal={IISE Transactions on Occupational Ergonomics and Human Factors},
  volume={7},
  number={3-4},
  pages={264--280},
  year={2019},
  publisher={Taylor \& Francis}
}

@article{mcatamney1993rula,
  title={RULA: a survey method for the investigation of work-related upper limb disorders},
  author={McAtamney, Lynn and Corlett, E Nigel},
  journal={Applied ergonomics},
  volume={24},
  number={2},
  pages={91--99},
  year={1993},
  publisher={Elsevier}
}

@inproceedings{zilles1995constraint,
  title={A constraint-based god-object method for haptic display},
  author={Zilles, Craig B and Salisbury, J Kenneth},
  booktitle={Proceedings 1995 ieee/rsj international conference on intelligent robots and systems. Human robot interaction and cooperative robots},
  volume={3},
  pages={146--151},
  year={1995},
  organization={IEEE}
}

@inproceedings{ruspini1997haptic,
  title={The haptic display of complex graphical environments},
  author={Ruspini, Diego C and Kolarov, Krasimir and Khatib, Oussama},
  booktitle={Proceedings of the 24th annual conference on Computer graphics and interactive techniques},
  pages={345--352},
  year={1997}
}

@article{stavridis2017dynamical,
  title={Dynamical system based robotic motion generation with obstacle avoidance},
  author={Stavridis, Sotiris and Papageorgiou, Dimitrios and Doulgeri, Zoe},
  journal={IEEE Robotics and Automation Letters},
  volume={2},
  number={2},
  pages={712--718},
  year={2017},
  publisher={IEEE}
}

@article{nakamura1986inverse,
    author = {Nakamura, Yoshihiko and Hanafusa, Hideo},
    title = {Inverse Kinematic Solutions With Singularity Robustness for Robot Manipulator Control},
    journal = {Journal of Dynamic Systems, Measurement, and Control},
    volume = {108},
    number = {3},
    pages = {163-171},
    year = {1986},
    month = {09},
    issn = {0022-0434},
    doi = {10.1115/1.3143764},
    url = {https://doi.org/10.1115/1.3143764},
    eprint = {https://asmedigitalcollection.asme.org/dynamicsystems/article-pdf/108/3/163/5482965/163_1.pdf},
}

@article{hasanen2024design,
  title={Design of twisted string actuated flexure joint for supernumerary robotic arm for bi-manual tasks},
  author={Hasanen, Basma and Suthar, Bhivraj and Zweiri, Yahya and Seneviratne, Lakmal and Hussain, Irfan},
  journal={IEEE Sensors Journal},
  year={2024},
  publisher={IEEE}
}

@inproceedings{morfino2024hybrid,
  title={A Hybrid Position/Force Control for Robot-Aided Pedicle Tapping in Spinal Surgery},
  author={Morfino, Rosaura and Lauretti, Clemente and Cordella, Francesca and Zollo, Loredana},
  booktitle={2024 10th IEEE RAS/EMBS International Conference for Biomedical Robotics and Biomechatronics (BioRob)},
  pages={1004--1010},
  year={2024},
  organization={IEEE}
}

@inproceedings{hendriks2024enhancing,
  title={Enhancing Functional and Extra Motor Abilities: A Focus Group Study on the Re-Design of an Extra-Robotic Finger},
  author={Hendriks, Sjoerd and Hasanen, Basma and Afzal, Naqash and Hussain, Irfan and Obaid, Mohammad},
  booktitle={2024 33rd IEEE International Conference on Robot and Human Interactive Communication (ROMAN)},
  pages={667--673},
  year={2024},
  organization={IEEE}
}

@article{prattichizzo2021human,
  title={Human augmentation by wearable supernumerary robotic limbs: review and perspectives},
  author={Prattichizzo, Domenico and Pozzi, Maria and Baldi, Tommaso Lisini and Malvezzi, Monica and Hussain, Irfan and Rossi, Simone and Salvietti, Gionata},
  journal={Progress in Biomedical Engineering},
  volume={3},
  number={4},
  pages={042005},
  year={2021},
  publisher={IOP Publishing}
}

@inproceedings{balatti2024robot,
  title={Robot-assisted navigation for visually impaired through adaptive impedance and path planning},
  author={Balatti, Pietro and Ozdamar, Idil and Sirintuna, Doganay and Fortini, Luca and Leonori, Mattia and Gandarias, Juan M and Ajoudani, Arash},
  booktitle={2024 IEEE International Conference on Robotics and Automation (ICRA)},
  pages={2310--2316},
  year={2024},
  organization={IEEE}
}

@article{benzi2022whole,
  title={Whole-body control of a mobile manipulator for passive collaborative transportation},
  author={Benzi, Federico and Mancus, Cristian and Secchi, Cristian},
  journal={IFAC-PapersOnLine},
  volume={55},
  number={38},
  pages={106--112},
  year={2022},
  publisher={Elsevier}
}

@inproceedings{navarro2017framework,
  title={A framework for intuitive collaboration with a mobile manipulator},
  author={Navarro, Benjamin and Cherubini, Andrea and Fonte, A{\"\i}cha and Poisson, G{\'e}rard and Fraisse, Philippe},
  booktitle={2017 IEEE/RSJ International Conference on Intelligent Robots and Systems (IROS)},
  pages={6293--6298},
  year={2017},
  organization={IEEE}
}

@article{hu2024proxy,
  title={Proxy-based guidance virtual fixtures with orientation constraints},
  author={Hu, Weitao and Pan, Xinan and Wang, Hongguang},
  journal={International Journal of Intelligent Robotics and Applications},
  pages={1--11},
  year={2024},
  publisher={Springer}
}

@article{giammarino2024super,
  title={SUPER-MAN: SUPERnumerary robotic bodies for physical assistance in huMAN--robot conjoined actions},
  author={Giammarino, Alberto and Gandarias, Juan M and Balatti, Pietro and Leonori, Mattia and Lorenzini, Marta and Ajoudani, Arash},
  journal={Mechatronics},
  volume={103},
  pages={103240},
  year={2024},
  publisher={Elsevier}
}

@article{lorenzini2023ergonomic,
  title={Ergonomic human-robot collaboration in industry: A review},
  author={Lorenzini, Marta and Lagomarsino, Marta and Fortini, Luca and Gholami, Soheil and Ajoudani, Arash},
  journal={Frontiers in Robotics and AI},
  volume={9},
  pages={813907},
  year={2023},
  publisher={Frontiers}
}

@article{proia2021control,
  title={Control techniques for safe, ergonomic, and efficient human-robot collaboration in the digital industry: A survey},
  author={Proia, Silvia and Carli, Raffaele and Cavone, Graziana and Dotoli, Mariagrazia},
  journal={IEEE Transactions on Automation Science and Engineering},
  volume={19},
  number={3},
  pages={1798--1819},
  year={2021},
  publisher={IEEE}
}

@article{kastritsi2022passive,
  title={A passive admittance controller to enforce remote center of motion and tool spatial constraints with application in hands-on surgical procedures},
  author={Kastritsi, Theodora and Doulgeri, Zoe},
  journal={Robotics and Autonomous Systems},
  volume={152},
  pages={104073},
  year={2022},
  publisher={Elsevier}
}

@article{liao2023ergo,
  title={An Ergo-Interactive Framework for Human-Robot Collaboration Via Learning From Demonstration},
  author={Liao, Zhiwei and Lorenzini, Marta and Leonori, Mattia and Zhao, Fei and Jiang, Gedong and Ajoudani, Arash},
  journal={IEEE Robotics and Automation Letters},
  year={2023},
  publisher={IEEE}
}

@inproceedings{leigh2015person,
  title={Person tracking and following with 2d laser scanners},
  author={Leigh, Angus and Pineau, Joelle and Olmedo, Nicolas and Zhang, Hong},
  booktitle={2015 IEEE international conference on robotics and automation (ICRA)},
  pages={726--733},
  year={2015},
  organization={IEEE}
}

@article{bowyer2013active,
  title={Active constraints/virtual fixtures: A survey},
  author={Bowyer, Stuart A and Davies, Brian L and y Baena, Ferdinando Rodriguez},
  journal={IEEE Transactions on Robotics},
  volume={30},
  number={1},
  pages={138--157},
  year={2013},
  publisher={IEEE}
}

@inproceedings{kastritsi2019stability,
  title={Stability of active constraints enforcement in sensitive regions defined by point-clouds for robotic surgical procedures},
  author={Kastritsi, Theodora and Papageorgiou, Dimitrios and Sarantopoulos, Iason and Doulgeri, Zoe and Rovithakis, George A},
  booktitle={2019 18th European Control Conference (ECC)},
  pages={1604--1609},
  year={2019},
  organization={IEEE}
}

@article{papageorgiou2020passive,
  title={A passive robot controller aiding human coaching for kinematic behavior modifications},
  author={Papageorgiou, Dimitrios and Kastritsi, Theodora and Doulgeri, Zoe},
  journal={Robotics and Computer-Integrated Manufacturing},
  volume={61},
  pages={101824},
  year={2020},
  publisher={Elsevier}
}

@article{papageorgiou2020passive2,
  title={A passive phri controller for assisting the user in partially known tasks},
  author={Papageorgiou, Dimitrios and Kastritsi, Theodora and Doulgeri, Zoe and Rovithakis, George A},
  journal={IEEE Transactions on Robotics},
  volume={36},
  number={3},
  pages={802--815},
  year={2020},
  publisher={IEEE}
}

@article{castillo2010virtual,
  title={Virtual fixtures with autonomous error compensation for human--robot cooperative tasks},
  author={Castillo-Cruces, Raul A and Wahrburg, J{\"u}rgen},
  journal={Robotica},
  volume={28},
  number={2},
  pages={267--277},
  year={2010},
  publisher={Cambridge University Press}
}

@article{bowyer2015dissipative,
  title={Dissipative control for physical human--robot interaction},
  author={Bowyer, Stuart A and y Baena, Ferdinando Rodriguez},
  journal={IEEE Transactions on Robotics},
  volume={31},
  number={6},
  pages={1281--1293},
  year={2015},
  publisher={IEEE}
}

@inproceedings{ryden2011proxy,
  title={Proxy method for fast haptic rendering from time varying point clouds},
  author={Ryden, Fredrik and Kosari, Sina Nia and Chizeck, Howard Jay},
  booktitle={2011 IEEE/RSJ International Conference on Intelligent Robots and Systems},
  pages={2614--2619},
  year={2011},
  organization={IEEE}
}

@article{ferraguti2020unified,
  title={A unified architecture for physical and ergonomic human--robot collaboration},
  author={Ferraguti, Federica and Villa, Renzo and Landi, Chiara Talignani and Zanchettin, Andrea Maria and Rocco, Paolo and Secchi, Cristian},
  journal={Robotica},
  volume={38},
  number={4},
  pages={669--683},
  year={2020},
  publisher={Cambridge University Press}
}

@inproceedings{shafti2019real,
  title={Real-time robot-assisted ergonomics},
  author={Shafti, Ali and Ataka, Ahmad and Lazpita, B Urbistondo and Shiva, Ali and Wurdemann, Helge A and Althoefer, Kaspar},
  booktitle={2019 International Conference on Robotics and Automation (ICRA)},
  pages={1975--1981},
  year={2019},
  organization={IEEE}
}

@inproceedings{zanchettin2019collaborative,
  title={Collaborative robot assistant for the ergonomic manipulation of cumbersome objects},
  author={Zanchettin, Andrea Maria and Lotano, Elio and Rocco, Paolo},
  booktitle={2019 IEEE/RSJ International Conference on Intelligent Robots and Systems (IROS)},
  pages={6729--6734},
  year={2019},
  organization={IEEE}
}

@article{hart2006nasa,
author = {Sandra G. Hart},
title ={Nasa-Task Load Index (NASA-TLX); 20 Years Later},

journal = {Proceedings of the Human Factors and Ergonomics Society Annual Meeting},
volume = {50},
number = {9},
pages = {904-908},
year = {2006},
doi = {10.1177/154193120605000909},

URL = { 
    
        https://doi.org/10.1177/154193120605000909
    
    

},
eprint = { 
    
        https://doi.org/10.1177/154193120605000909
    
    

}
}

@article{lewis2018system,
  title={The system usability scale: past, present, and future},
  author={Lewis, James R},
  journal={International Journal of Human--Computer Interaction},
  volume={34},
  number={7},
  pages={577--590},
  year={2018},
  publisher={Taylor \& Francis}
}

@article{lagomarsino2023maximising,
  title={Maximising coefficiency of human-robot handovers through reinforcement learning},
  author={Lagomarsino, Marta and Lorenzini, Marta and Constable, Merryn Dale and De Momi, Elena and Becchio, Cristina and Ajoudani, Arash},
  journal={IEEE Robotics and Automation Letters},
  volume={8},
  number={8},
  pages={4378--4385},
  year={2023},
  publisher={IEEE}
}

@article{lorenzini2022online,
  title={An online multi-index approach to human ergonomics assessment in the workplace},
  author={Lorenzini, Marta and Kim, Wansoo and Ajoudani, Arash},
  journal={IEEE Transactions on Human-Machine Systems},
  volume={52},
  number={5},
  pages={812--823},
  year={2022},
  publisher={IEEE}
}

@article{dominijanni2023human,
  title={Human motor augmentation with an extra robotic arm without functional interference},
  author={Dominijanni, Giulia and Pinheiro, Daniel Leal and Pollina, Leonardo and Orset, Bastien and Gini, Martina and Anselmino, Eugenio and Pierella, Camilla and Olivier, J{\'e}r{\'e}my and Shokur, Solaiman and Micera, Silvestro},
  journal={Science Robotics},
  volume={8},
  number={85},
  pages={eadh1438},
  year={2023},
  publisher={American Association for the Advancement of Science}
}

@inproceedings{raei2024multipurpose,
  title={A multipurpose interface for close-and far-proximity control of mobile collaborative robots},
  author={Raei, Hamidreza and Gandarias, Juan M and De Momi, Elena and Balatti, Pietro and Ajoudani, Arash},
  booktitle={2024 10th IEEE RAS/EMBS International Conference for Biomedical Robotics and Biomechatronics (BioRob)},
  pages={457--464},
  year={2024},
  organization={IEEE}
}

@article{eden2022principles,
  title={Principles of human movement augmentation and the challenges in making it a reality},
  author={Eden, Jonathan and Br{\"a}cklein, Mario and Ib{\'a}{\~n}ez, Jaime and Barsakcioglu, Deren Yusuf and Di Pino, Giovanni and Farina, Dario and Burdet, Etienne and Mehring, Carsten},
  journal={Nature Communications},
  volume={13},
  number={1},
  pages={1345},
  year={2022},
  publisher={Nature Publishing Group UK London}
}

\end{document}